\providecommand\theHALG@line{\thealgorithm.\arabic{ALG@line}}
\newcommand{\so}{u}
\newcommand{\tar}{v}
\newcommand{\p}{p}
\newcommand{\ptrue}{\mbox {$p^{*}$}}
\newcommand{\pucb}{\mbox {$\bar{p}$}}
\newcommand{\CB}{z}
\newcommand{\vecy}{\mbox {$\bm{y}$}}
\newcommand{\vecw}{\mbox {$\bm{\theta}$}}
\newcommand{\vecx}{\mbox {$\bm{x}$}}
\newcommand{\vecb}{\mbox {$\bm{b}$}}
\newcommand{\vecD}{\mbox {$\bm{D}$}}
\newcommand{\Sopt}{\mbox {$S^{*}$}}
\newcommand{\spr}{F}
\newcommand{\seeds}{\mathcal{S}}
\newcommand{\tcS}{\tilde{\mathcal{S}}}
\newcommand{\dilinucb}{{\tt DILinUCB}}
\newcommand{\oracle}{{\tt ORACLE}}
\DeclareMathOperator{\Proj}{Proj}
\DeclareMathOperator{\Tr}{Tr}
\DeclareMathOperator{\vect}{vec}
\newtheorem{theorem}{Theorem}
\newtheorem{lemma}{Lemma}
\def\thm@space@setup{%
  \thm@preskip=\parskip \thm@postskip=0pt
}
\DeclareMathOperator*{\argmax}{arg\,max}
\DeclareMathOperator*{\argmin}{arg\,min}
\newtheorem{assumption}{Assumption}
\newcommand{\E}{\mathbb{E}}
\newcommand{\cC}{\mathcal{C}}
\newcommand{\cD}{\mathcal{D}}
\newcommand{\cE}{\mathcal{E}}
\newcommand{\cF}{\mathcal{F}}
\newcommand{\cG}{\mathcal{G}}
\newcommand{\cH}{\mathcal{H}}
\newcommand{\cS}{\mathcal{S}}
\newcommand{\cV}{\mathcal{V}}
\newcommand{\bw}{{\bf w}}
\renewcommand{\epsilon}{\varepsilon}
\renewcommand{\hat}{\widehat}
\renewcommand{\tilde}{\widetilde}
\renewcommand{\bar}{\overline}
\newcommand{\nothere}[1]{}
\definecolor{babyblue}{rgb}{0.54, 0.81, 0.94}
\definecolor{citrine}{rgb}{0.89, 0.82, 0.04}
\definecolor{misocolor}{rgb}{0.16,0.27,0.86}
\icmltitlerunning{Model-Independent Online Learning for Influence Maximization}
\begin{document} 

\twocolumn[
\icmltitle{Model-Independent Online Learning for Influence Maximization}



\icmlsetsymbol{equal}{*}

\begin{icmlauthorlist}
\icmlauthor{Sharan Vaswani}{ubc}
\icmlauthor{Branislav Kveton}{adobe}
\icmlauthor{Zheng Wen}{adobe}
\icmlauthor{Mohammad Ghavamzadeh}{google}
\icmlauthor{Laks V.S. Lakshmanan}{ubc}
\icmlauthor{Mark Schmidt}{ubc}
\end{icmlauthorlist}

\icmlaffiliation{ubc}{University of British Columbia}
\icmlaffiliation{adobe}{Adobe Research}
\icmlaffiliation{google}{DeepMind (The work was done when the author was with Adobe Research)}
\icmlcorrespondingauthor{Sharan Vaswani}{sharanv@cs.ubc.ca}

\icmlkeywords{viral marketing, influence maximization, linear bandits, upper confidence bound}

\vskip 0.3in
]

\printAffiliationsAndNotice{}  

\setstretch{0.935}

\begin{abstract}
We consider \emph{influence maximization} (IM) in social networks, which is the problem of maximizing the number of users that become aware of a product by selecting a set of ``seed'' users to expose the product to. While prior work assumes a known model of information diffusion, we propose a novel parametrization that not only makes our framework agnostic to the underlying diffusion model, but also statistically efficient to learn from data. We give a corresponding monotone, submodular surrogate function, and show that it is a good approximation to the original IM objective. We also consider the case of a new marketer looking to exploit an existing social network, while simultaneously learning the factors governing information propagation. For this, we propose a pairwise-influence semi-bandit feedback model and develop a LinUCB-based bandit algorithm. Our model-independent analysis shows that our regret bound has a better (as compared to previous work) dependence on the size of the network. Experimental evaluation suggests that our framework is robust to the underlying diffusion model and can efficiently learn a near-optimal solution.
\end{abstract}

\vspace*{-4ex}
\section{Introduction}
\label{sec:introduction}
\vspace*{-1.5ex}
The aim of viral marketing is to spread awareness about a specific product via word-of-mouth information propagation over a social network. More precisely, marketers (agents) aim to select a fixed number of influential users (called \emph{seeds}) and provide them with free products or discounts. They assume that these users will influence their neighbours and, transitively, other users in the social network to adopt the product. This will thus result in information propagating across the network as more users adopt or become aware of the product. The marketer has a budget on the number of free products and must choose seeds in order to maximize the \emph{influence spread} which is the expected number of users that become aware of the product. This problem is referred to as \emph{influence maximization} (IM). 

Existing solutions to the IM problem require as input, the underlying diffusion model which describes how information propagates through the network. The IM problem has been studied under various probabilistic diffusion models such as independent cascade (IC) and linear threshold (LT) models~\citep{kempe2003maximizing}. Under these common models, there has been substantial work on developing efficient heuristics and approximation algorithms~\cite{chen2009efficient,leskovec2007cost,goyal2011simpath,goyal2011data,Tang2014Influence,Tang2015TIM+}. 

Unfortunately, knowledge of the underlying diffusion model and its parameters is essential for the existing IM algorithms to perform well. For example,~\citet{Du2014} empirically showed that misspecification of the diffusion model can lead to choosing bad seeds and consequently to a low spread. In practice, it is not clear how to choose from amongst the increasing number of plausible diffusion models~\cite{kempe2003maximizing,gomez2012influence,li2013influence}. Even if we are able to choose a diffusion model according to some prior information, the number of parameters for these models scales with the size of the network (for example, it is equal to the number of edges for both the IC and LT models) and it is not clear how to set these. \citet{goyal2011data} showed that even when assuming the IC or LT model, correct knowledge of the model parameters is critical to choosing good seeds that lead to a large spread. Some papers try to learn these parameters from past propagation data~\cite{saito2008prediction,goyal2010learning,netrapalli2012learning}. However in practice, such data is hard to obtain and the large number of parameters makes this learning challenging. 

To overcome these difficulties, we propose a novel parametrization for the IM problem in terms of \emph{pairwise reachability probabilities} (\cref{sec:im}). This parametrization depends only on the state of the network after the information diffusion has taken place. Since it does not depend on \emph{how} information diffuses, it is agnostic to the underlying diffusion model. To select seeds based on these reachability probabilities, we propose a monotone and submodular surrogate objective function based on the notion of \emph{maximum reachability} (\cref{sec:approximation}). Our surrogate function can be optimized efficiently and is a a good approximation to the IM objective. We theoretically bound the quality of this approximation. Our parametrization may be of independent interest to the IM community.

Next, we consider learning how to choose good seeds in an online setting. Specifically, we focus on the case of a new marketer looking to exploit an existing network to market their product. They need to choose a good seed set, while simultaneously learning the factors affecting information propagation. This motivates the learning framework of IM semi-bandits~\cite{vaswani2015influence, chen2016combinatorial, wen2016influence}. In these works, the marketer performs IM over multiple ``rounds'' and learns about the factors governing the diffusion on the fly. Each round corresponds to an IM attempt for the same or similar products. Each attempt incurs a loss in the influence spread (measured in terms of \emph{cumulative regret}) because of the lack of knowledge about the diffusion process. The aim is to minimize the cumulative regret incurred across multiple such rounds. This leads to the classic exploration-exploitation trade-off where the marketer must either choose seeds that either improve their knowledge about the diffusion process (``exploration'') or find a seed set that leads to a large expected spread (``exploitation''). Note that all previous works on IM semi-bandits assume the IC model. 
%

We propose a novel semi-bandit feedback model based on pairwise influence (\cref{sec:semi-bandit}). Our feedback model is weaker than the edge-level feedback proposed in~\cite{chen2016combinatorial, wen2016influence}. Under this feedback, we formulate IM semi-bandit as a linear bandit problem and propose a scalable LinUCB-based algorithm (\cref{sec:ucb}). We bound the cumulative regret of this algorithm (Section~\ref{sec:bound}) and show that our regret bound has the optimal dependence on the time horizon, is linear in the cardinality of the seed set, and as compared to the previous literature, has a better dependence on the size of the network. In \cref{sec:implementation}, we describe how to construct features based on the graph Laplacian eigenbasis and describe a practical implementation of our algorithm. Finally, in \cref{sec:experiments}, we empirically evaluate our proposed algorithm on a real-world network and show that it is statistically efficient and robust to the underlying diffusion model.

\section{Influence Maximization}
\label{sec:im} 
The IM problem is characterized by the triple $\left( \cG,  \cC, \cD \right )$, where $\cG$ is a directed graph encoding the topology of the social network, $\cC$ is the collection of feasible seed sets, and $\cD$ is the underlying diffusion model. Specifically, $\cG = \left( \cV, \cE \right)$, where $\cV=\left \{1,2,\ldots, n\right \}$ and $\cE$ are the node and edge sets of $\cG$, with cardinalities $n = \vert \cV \vert$ and $m = \vert \cE \vert$, respectively. 
The collection of feasible seed sets $\cC$ is determined by a \emph{cardinality constraint} on the sets and possibly some \emph{combinatorial constraints} (e.g. matroid constraints) that rule out some subsets of $\cV$. This implies that $\cC \subseteq \{\cS \subseteq \cV: |\cS| \leq K\}$, for some $K \leq n$. The diffusion model $\cD$ specifies the stochastic process under which influence is propagated across the social network once a seed set $\cS \in \cC$ is selected. Without loss of generality, we assume that all stochasticity in $\cD$ is encoded in a random vector $\bw$, referred to as the \emph{diffusion random vector}. Note that throughout this paper, we denote vectors in bold case. We assume that each diffusion has a corresponding $\bw$ sampled independently from an underlying probability distribution $\mathbb{P}$ specific to the diffusion model. For the widely-used models IC and LT, $\bw$ is an $m$-dimensional binary vector encoding edge activations for all the edges in $\cE$, and $\mathbb{P}$ is parametrized by $m$ \emph{influence probabilities}, one for each edge. Once $\bw$ is sampled, we use $\cD(\bw)$ to refer to the particular realization of the diffusion model $\cD$. Note that by definition, $\cD(\bw)$ is deterministic, conditioned on $\bw$. 
 
Given the above definitions, an IM attempt can be described as: the marketer first chooses a seed set $\cS \in \cC$ and then nature independently samples a diffusion random vector $\bw \sim \mathbb{P}$. Note that the influenced nodes in the diffusion are completely determined by $\cS$ and $\cD(\bw)$. We use the indicator $\mathds{1}\big(\cS,v,\cD(\bw)\big) \in \{0, 1 \}$ to denote if the node $v$ is influenced under the seed set $\cS$ and the particular realization $\cD(\bw)$. For a given $(\cG, \cD)$, once a seed set $\cS \subseteq \cC$ is chosen, for each node $v \in \cV$, we use $\spr(\cS, v)$ to denote the probability that $v$ is influenced under the seed set $\cS$, i.e.,
\begin{flalign}
\spr(\cS, v) =  \mathbb{E} \left[  \mathds{1}\big(\cS, v, \cD (\bw)\big) \middle | \cS \right]
\end{flalign}
where the expectation is over all possible realizations $\cD (\bw)$. We denote by $\spr(\cS) = \sum_{v \in \cV} \spr(\cS, v)$, the expected number of nodes that are influenced when the seed set $\cS$ is chosen. The aim of the IM problem is to maximize $\spr(\cS)$ subject to the constraint $\cS \in \cC$, i.e., to find $\cS^\ast \in \argmax_{\mathcal{S} \in \cC} \spr(\cS)$. Although IM is an NP-hard problem in general, under common diffusion models such as IC and LT, the objective function $\spr(\cS)$ is monotone and submodular, and thus, a near-optimal solution can be computed in polynomial time using a greedy algorithm~\cite{nemhauser1978analysis}.
In this work, we assume that $\cD$ is any diffusion model satisfying the following monotonicity assumption:
\begin{assumption}
\label{assum:monotone}
For any $v \in \cV$ and any subsets $\cS_1 \subseteq \cS_2 \subseteq \cV$, if $\spr(\cS_1, v) \leq \spr(\cS_2, v)$, then $\spr(\cS, v)$ is monotone in $\cS$. 
\end{assumption}
\vspace*{-0.5 ex}
Note that all progressive diffusion models (models where once the user is influenced, they can not change their state), including those in~\cite{kempe2003maximizing,gomez2012influence,li2013influence} satisfy Assumption~\ref{assum:monotone}. 
\section{Surrogate Objective}
\label{sec:approximation}
We now motivate and propose a surrogate objective for the IM problem based on the notion of \emph{maximal pairwise reachability}. We start by defining some useful notation. For any set $\cS \subseteq \cV$ and any set of ``pairwise probabilities" $p: \cV \times \cV \rightarrow [0,1]$, for all nodes $v \in \cV$, we define
\begin{flalign}
f(\cS, v, \p) = \textstyle \max_{u \in \cS}\;p_{u,v}
\label{eq:oracle}
\end{flalign}
where $p_{u,v}$ is the pairwise probability associated with the ordered node pair $(u,v)$. We further define $f(\cS,\p) = \sum_{v \in \cV} f(\cS, v,\p)$. Note that for all  $p$, $f(\cS,\p)$ is always monotone and submodular in $\cS$~\cite{krause2012submodular}.

For any pair of nodes $u, v \in \cV$, we define the \emph{pairwise reachability} from $u$ to $v$ as $p^*_{u,v} = \spr(\{ u \}, v)$, i.e.,~the probability that $v$ will be influenced, if $u$ is the only seed node under graph $\cG$ and diffusion model $\cD$. Throughout this paper, we use ``source node'' and ``seed'' interchangeably and refer to the nodes \emph{not} in the seed set $\cS$ as ``target'' nodes. We define $f(\cS, v, \p^*) = \max_{u \in \cS} p^*_{u,v}$ as the \emph{maximal pairwise reachability} from the seed set $\cS$ to the target node $v$. 

Our proposed surrogate objective for the IM problem is $f(\cS, \p^*) = \sum_{v \in \cV} f(\cS, v, \p^*)$. Based on this objective, an approximate solution $\tcS$ to the IM problem can be obtained by maximizing $f(\cS,\ptrue)$ under the constraint $\cS \in \cC$,
\begin{align}
\tcS \in \textstyle \argmax_{\cS \in \cC} f(\cS,\ptrue)
\end{align}
Recall that $\seeds^*$ is the optimal solution to the IM problem. To quantify the quality of the surrogate, we define the \emph{surrogate approximation factor} as $\rho=f(\tcS,\ptrue)/ \spr(\seeds^*)$. The following theorem, (proved in~\cref{sec:mpr_appr}) states that we can obtain the following upper and lower bounds on $\rho$:
\begin{theorem} 
For any graph $\cG$, seed set $\cS \in \cC$, and diffusion model $\cD$ satisfying Assumption~\ref{assum:monotone}, 
\vspace*{-1ex}
\begin{itemize}
\itemsep0em
\item[1] $f(\cS,\ptrue) \leq \spr(\cS)$, 
\item[2] If $\spr(\cS)$ is submodular in $\cS$, then $1/K \leq \rho \leq 1$.  
\end{itemize}
\label{thm:mpr_appr}
\end{theorem}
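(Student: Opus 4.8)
The plan is to prove the three inequalities in turn, using monotonicity and submodularity of the various objectives together with the simple structure of $f(\cS,\p)$ as a ``max over seeds'' function.

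\textbf{Part 1: $f(\cS,\ptrue) \leq \spr(\cS)$.} I would argue node-by-node, i.e.\ show $f(\cS,v,\ptrue) \leq \spr(\cS,v)$ for every $v \in \cV$ and then sum over $v$. Fix $v$. Pick $u^\ast \in \argmax_{u \in \cS} p^\ast_{u,v}$, so that $f(\cS,v,\ptrue) = p^\ast_{u^\ast,v} = \spr(\{u^\ast\},v)$. Since $\{u^\ast\} \subseteq \cS$, Assumption~\ref{assum:monotone} gives $\spr(\{u^\ast\},v) \leq \spr(\cS,v)$ --- here I would check carefully that the hypothesis of the assumption is met, namely that there is \emph{some} smaller set on which $\spr(\cdot,v)$ is dominated, which licenses monotonicity of $\spr(\cdot,v)$ along the chain $\{u^\ast\} \subseteq \cS$; the intended reading of the assumption is that progressive models always give $\spr(\cS_1,v) \le \spr(\cS_2,v)$ for $\cS_1 \subseteq \cS_2$. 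Summing over $v$ yields $f(\cS,\ptrue) \le \spr(\cS)$, and in particular $f(\tcS,\ptrue) \le \spr(\tcS) \le \spr(\seeds^\ast)$, which already gives the upper bound $\rho \le 1$ in Part 2.

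\textbf{Part 2, lower bound: $\rho \geq 1/K$.} Assume $\spr$ is submodular (and monotone, and $\spr(\emptyset)=0$). The key observation is a ``singleton'' bound: for any set $\cS = \{u_1,\dots,u_k\}$ with $k \le K$, submodularity plus monotonicity give $\spr(\cS) \le \sum_{i=1}^{k} \spr(\{u_i\}) = \sum_{i=1}^k f(\{u_i\},\ptrue) \le k \cdot \max_i f(\{u_i\},\ptrue) \le K \cdot \max_{u \in \cV} f(\{u\},\ptrue)$. Wait --- more directly: I want to relate $\spr(\seeds^\ast)$ to $f(\tcS,\ptrue)$. Apply the singleton bound to $\seeds^\ast$: $\spr(\seeds^\ast) \le \sum_{u \in \seeds^\ast} \spr(\{u\}) = \sum_{u \in \seeds^\ast} f(\{u\},\ptrue) \le K \max_{u \in \seeds^\ast} f(\{u\},\ptrue)$. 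Let $u^\dagger$ attain this max; then $\{u^\dagger\} \in \cC$ if singletons are feasible, but to be safe I would instead note that $\max_{u\in\seeds^\ast} f(\{u\},\ptrue) \le \max_{\cS\in\cC} f(\cS,\ptrue) = f(\tcS,\ptrue)$, which holds because any singleton contained in $\seeds^\ast$ is itself dominated in $f$-value by... hmm, this requires singletons or $\seeds^\ast$ itself to be in $\cC$. The clean route: since $\seeds^\ast \in \cC$ and $f(\cdot,\ptrue)$ is monotone, $f(\tcS,\ptrue) \ge f(\seeds^\ast,\ptrue) = \sum_v \max_{u\in\seeds^\ast} p^\ast_{u,v} \ge \frac{1}{K}\sum_v \sum_{u\in\seeds^\ast} p^\ast_{u,v}$ (since a max of $\le K$ nonnegative terms is at least their average) $= \frac{1}{K}\sum_{u\in\seeds^\ast}\spr(\{u\}) \ge \frac{1}{K}\spr(\seeds^\ast)$, where the last step is submodularity of $\spr$. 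Dividing by $\spr(\seeds^\ast)$ gives $\rho \ge 1/K$.

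\textbf{Main obstacle.} The routine algebra (averaging bound for the max, subadditivity from submodularity) is straightforward; the delicate point is the precise interpretation and invocation of Assumption~\ref{assum:monotone}, which is stated somewhat awkwardly. I need to make sure that the chain-monotonicity $\spr(\{u^\ast\},v)\le\spr(\cS,v)$ used in Part 1, and the submodularity/subadditivity $\spr(\seeds^\ast)\le\sum_{u\in\seeds^\ast}\spr(\{u\})$ used in Part 2, are both legitimately available --- the latter is exactly the hypothesis ``$\spr(\cS)$ is submodular'' combined with $\spr(\emptyset)=0$. A secondary subtlety is feasibility: I rely only on $\seeds^\ast \in \cC$ and monotonicity of $f$, avoiding any assumption that singletons lie in $\cC$, so the argument goes through for general $\cC \subseteq \{\cS : |\cS|\le K\}$.
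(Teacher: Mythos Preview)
Your proposal is correct and follows essentially the same route as the paper. Part~1 is identical (node-by-node monotonicity then sum), and for the lower bound in Part~2 both arguments combine subadditivity of $\spr$ (from submodularity and $\spr(\emptyset)=0$) with the elementary bound $\sum_{u\in\cS}p^*_{u,v}\le K\max_{u\in\cS}p^*_{u,v}$ and the optimality of $\tcS$ for $f$; you phrase the middle step as ``max $\ge$ average'' and apply it directly to $\seeds^\ast$, whereas the paper telescopes $\spr(\cS)$ and writes ``sum $\le K\cdot$max'' for a generic $\cS$ before specializing, but the content is the same.
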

\vspace*{-2ex}
The above theorem implies that for any progressive model satisfying Assumption~\ref{assum:monotone}, maximizing $f(\cS,\ptrue)$ is  equivalent to maximizing a lower-bound on the true spread $\spr(\cS)$. For both IC and LT models, $\spr(\cS)$ is both monotone and submodular, and the approximation factor can be bounded from below by $1/K$. In~\cref{sec:experiments}, we empirically show that in cases of practical interest, $f(\cS,\ptrue)$ is a good approximation to $\spr(\cS)$ and that $\rho$ is much larger than $1/K$.

Finally, note that solving $\tcS \in \argmax_{\cS \in \cC} f(\cS,\ptrue)$ exactly might be computationally intractable and thus we need to compute a near-optimal solution based on an approximation algorithm. In this paper, we refer to such approximation algorithms as \emph{oracles} to distinguish them from learning algorithms. Let $\oracle$ be a specific oracle and let $\hat{\cS} \stackrel{\Delta}{=} \oracle(\cG, \cC, p)$ be the seed set output by it. For any $\alpha \in [0,1]$, we say that $\oracle$ is an $\alpha$-approximation algorithm if for all $p: \cV \times \cV \rightarrow [0,1]$, $f (\hat{\cS}, p ) \geq \alpha \max_{\cS \in \cC} f(\cS, p)$. For our particular case, since $f(\cS,\ptrue)$ is submodular, a valid oracle is the greedy algorithm which gives an $\alpha = 1 -1/e$ approximation~\cite{nemhauser1978analysis}. Hence, given the knowledge of $\ptrue$, we can obtain an approcimate solution to the IM problem without knowing the exact underlying diffusion model. 
\section{Influence Maximization Semi-Bandits}
\label{sec:semi-bandit}
We now focus on the case of a new marketer trying to learn the pairwise reachabilities by repeatedly interacting with the network. We describe the observable feedback (\cref{sec:feedback}) and the learning framework (\cref{sec:lin-gen}). 
\subsection{Influence Maximization Semi-Bandits}
In an influence maximization semi-bandit problem, the agent (marketer) knows both $\cG$ and $\cC$, but does not know the diffusion model $\cD$. Specifically, the agent knows neither the model of $\cD$, for instance whether $\cD$ is the IC or LT model; nor its parameters, for instance the influence probabilities in the IC or LT model. Consider a scenario in which the agent interacts with the social network for $T$ rounds. 
%
At each round  $t\in\{1,\ldots,T\}$, the agent first chooses a seed set $\cS_t \in \cC$ based on its prior knowledge and past observations and then nature independently samples a diffusion random vector $\bw_t \sim \mathbb{P}$. Influence thus diffuses in the social network from $\cS_t$ according to $\cD(\bw_t)$.
%
%
The agent's reward at round $t$ is the number of the influenced nodes
\begin{equation*}
r_t =\textstyle \sum_{v \in \cV} \mathds{1} \big( \cS_t, v, \cD(\bw_t) \big).
\end{equation*}
Recall that by definition, $\mathbb{E} \left[ r_t \middle | \cS_t, \cD(\bw_t) \right] = \spr\left( \cS_t \right)$. After each such IM attempt, the agent observes the \emph{pairwise influence feedback} (described next) and uses it to improve the subsequent IM attempts. The agent's objective is to maximize the expected cumulative reward across the $T$ rounds, i.e.,~to maximize $\mathbb{E} \left[ \sum_{t=1}^T r_t \right]$. This is equivalent to minimizing the cumulative regret defined subsequently. 
\subsection{Pairwise Influence Feedback Model}
\label{sec:feedback}
We propose a novel IM semi-bandit feedback model referred to as \emph{pairwise influence feedback}. Under this feedback model, at the end of each round $t$, the agent observes $\mathds{1} \big(\{ u\}, v, \cD(\bw_t)\big)$ for all $u \in \cS_t$ and all $v \in \cV$. In other words, it observes whether or not $v$ would be influenced, if the agent selects $\cS = \left \{ u \right \}$ as the seed set under the diffusion model $\cD(\bw_t)$. 
This form of semi-bandit feedback is plausible in most IM scenarios. For example, on sites like Facebook, we can identify the user who influenced another user to ``share'' or ``like'' an article, and thus, can transitively trace the propagation to the seed which started the diffusion. Note that our assumption is strictly weaker than (and implied by) edge level semi-bandit feedback~\cite{chen2016combinatorial, wen2016influence}: from edge level feedback, we can identify the edges along which the diffusion travelled, and thus, determine whether a particular source node is responsible for activating a target node. However, from pairwise feedback, it is impossible to infer a unique edge level feedback. 
 \vspace*{-2ex}
\subsection{Linear Generalization}
\label{sec:lin-gen}
Parametrizing the problem in terms of reachability probabilities results in $O(n^2)$ parameters that need to be learned. Without any structural assumptions, this becomes intractable for large networks. To develop statistically efficient algorithms for large-scale IM semi-bandits, we make a linear generalization assumption similar to~\cite{wen2015efficient, wen2016influence}. Assume that each node $v \in \cV$ is associated with two vectors of dimension~$d$, the seed (source) weight $ \vecw^{*}_{\tar} \in \Re^d$ and the target feature $\vecx_{v} \in \Re^d$. We assume that the target feature $\vecx_{\tar}$ is known, whereas $ \vecw^{*}_{\tar}$ is unknown and needs to be learned. The linear generalization assumption is stated as:
\begin{assumption}
For all $u,v \in \cV$, $p^*_{\so,\tar}$ can be ``well approximated" by the inner product of $ \vecw^{*}_{\so}$ and $\vecx_{\tar}$, i.e., 
\begin{align*}
p^*_{\so,\tar} \approx \langle \vecw^{*}_{\so}, \vecx_{\tar} \rangle \stackrel{\Delta}{=}  \vecx_{\tar}^\top  \vecw^{*}_{\so}
\end{align*}
\label{assump:lg}
\end{assumption}
\vspace*{-3ex}
Note that for the \emph{tabular case} (the case without generalization across $p^*_{\so,\tar}$), we can always choose $\vecx_{\tar}=e_{\tar} \in \Re^n$ and $ \vecw^{*}_{\so} =\left [p^*_{\so,1}, \ldots, p^*_{\so,n} \right]^T$, where $e_{\tar}$ is an $n$-dimensional indicator vector with the $\tar$-th element equal to $1$ and all other elements equal to $0$. However, in this case $d=n$, which is not desirable. Constructing target features when $d \ll n$ is non-trivial. We discuss a feature construction approach based on the unweighted graph Laplacian in \cref{sec:implementation}. We use matrix $X \in \Re^{d \times n}$ to encode the target features. Specifically, for $\tar = 1,\ldots, n$, the $\tar$-th column of $X$ is set as $\vecx_{\tar}$. Note that $X=I \in \Re^{n \times n}$ in the tabular case.

Finally, note that under Assumption~\ref{assump:lg}, estimating the reachability probabilities becomes equivalent to estimating $n$ (one for each source) $d$-dimensional weight vectors. This implies that Assumption~\ref{assump:lg} reduces the number of parameters to learn from $O(n^2)$ to $O(d n)$, and thus, is important for developing statistically efficient algorithms for large-scale IM semi-bandits.


\subsection{Performance Metric}
We benchmark the performance of an IM semi-bandit algorithm by comparing its spread against the attainable influence assuming perfect knowledge of $\cD$. Since it is NP-hard to compute the optimal seed set even when with perfect knowledge, similar to~\cite{wen2016influence,chen2016combinatorial}, we measure the performance of an IM
semi-bandit algorithm by \emph{scaled cumulative regret}. Specifically, if $\cS_{t}$ is the seed set selected by the IM semi-bandit algorithm at round $t$, for any $\kappa \in (0,1)$, the $\kappa$-scaled cumulative regret $R^{\kappa}(T)$ in the first $T$ rounds is defined as
\begin{flalign}
R^{\kappa}(T) = T \cdot \spr(\cS^*) - \frac{1}{\kappa} \E \left[ \textstyle \sum_{t = 1}^{T} \spr( \cS_{t} ) \right].
\label{eq:cumu-regret}
\end{flalign}

\section{Algorithm}
\label{sec:ucb}
In this section, we propose a LinUCB-based IM semi-bandit algorithm, called \emph{diffusion-independent LinUCB} ($\dilinucb$), whose pseudocode is in \cref{algo:ucb}. As its name suggests, $\dilinucb$ is applicable to IM semi-bandits with any diffusion model $\cD$ satisfying Assumption~]ref{assum:monotone}. The only requirement to apply $\dilinucb$ is that the IM semi-bandit provides the pairwise influence feedback described in \cref{sec:feedback}.

\begin{algorithm}[tb]
\begin{algorithmic}[1]
   \STATE{\bfseries Input:} $\cG=\left(\cV, \cE \right)$, $\cC$, oracle $\oracle$, target feature matrix $X \in \mathbb{R}^{d \times n}$, algorithm parameters $c, \lambda, \sigma>0$
   \label{al:ip}
   
   \STATE Initialize $\Sigma_{\so, 0} \leftarrow \lambda I_{d}$, $\vecb_{\so, 0} \leftarrow \mathbf{0}$, $\hat{\vecw}_{\so,0} \leftarrow \mathbf{0}$ for all $u \in \cV$,
   and UCB $\pucb_{\so,\tar} \leftarrow 1$  for all $\so,\tar \in \cV$ \label{al:ini}

   \FOR{$t=1$ {\bfseries to} $T$}

   \STATE {\label{al:oracle}} Choose $\cS_{t} \leftarrow \oracle \left( \cG, \cC, \pucb \right)$  
      
   \FOR{$\so \in \cS_{t}$ }

	   \STATE Get pairwise influence feedback $\vecy_{\so,t}$ \label{al:get-y}

	    \STATE $\vecb_{\so, t} \leftarrow \vecb_{\so, t-1} + X \vecy_{\so,t}$ \label{al:update-b}
    	 \STATE $\Sigma_{\so, t} \leftarrow \Sigma_{\so, t-1} + \sigma^{-2} X X^{T}$ \label{al:update-gram}
   
   	   \STATE $ \hat{\vecw}_{\so, t} \leftarrow \sigma^{-2} \Sigma_{\so, t}^{-1} \vecb_{\so, t}$ \label{al:update-w}

	   \STATE $\pucb_{\so,\tar} \leftarrow \Proj_{[0,1]} \left[ \langle \hat{\vecw}_{\so, t} \vecx_{\tar} \rangle  + c \| \vecx_{\tar} \|_{\Sigma_{\so, t}^{-1}} \right]$,
	 $\forall \tar \in \cV$
   \ENDFOR 
   
	\FOR{$\so \not \in \cS_{t}$ }
			\STATE $\vecb_{\so, t} = \vecb_{\so, t-1}$
			\STATE $\Sigma_{\so, t} = \Sigma_{\so, t-1}$
	\ENDFOR

   \ENDFOR
\end{algorithmic}
\caption{Diffusion-Independent LinUCB ($\dilinucb$)}
\label{algo:ucb}
\end{algorithm}
The inputs to $\dilinucb$ include the network topology $\cG$, the collection of the feasible sets $\cC$, the optimization algorithm $\oracle$, the target feature matrix $X$, and three algorithm parameters $c, \lambda, \sigma>0$. The parameter $\lambda$ is a 
regularization parameter whereas $\sigma$ is proportional to the noise in the observations and hence controls the learning rate. For each source node $\so \in \cV$ and time $t$, we define the Gram matrix $\Sigma_{\so,t} \in \Re^{d \times d}$, and $\vecb_{\so,t} \in \Re^d$ as the vector summarizing the past propagations from $\so$. The vector $\vecw_{\so,t}$ is the source weight estimate for node $\so$ at round $t$. The mean reachability probability from $\so$ to $\tar$ is given by $\langle \hat{\vecw}_{\so,t} , \vecx_{\tar} \rangle$, whereas its variance is given as $\| \vecx_{\tar} \|_{\Sigma_{\so,t}^{-1}} = \sqrt{\vecx_{\tar}^T \Sigma_{\so, t}^{-1} \vecx_{\tar}}$. Note that $\Sigma_{\so}$ and $\vecb_{\so}$ are sufficient statistics for computing UCB estimates $\pucb_{\so,\tar}$ for all $\tar \in \cV$. The parameter $c$ trades off the mean and variance in the UCB estimates and thus controls the ``degree of optimism" of the algorithm. 

All the Gram matrices are initialized to $\lambda I_{d}$, where $I_{d}$ denotes the $d$-dimensional identity matrix whereas the vectors $\vecb_{\so,0}$ and $\vecw_{\so,0}$ are set to $d$-dimensional all-zeros vectors. At each round $t$, $\dilinucb$ first uses the existing UCB estimates to compute the seed set $\cS_{t}$ based on the given oracle $\oracle$ (line $4$ of \cref{algo:ucb}). Then, it observes the pairwise reachability vector $\vecy_{\so,t}$ for all the selected seeds in $\cS_{t}$. The vector $\vecy_{\so,t}$ is an $n$-dimensional column vector such that $\vecy_{\so,t}(\tar) = \mathds{1} \left(\{\so\}, \tar, \cD (\bw_t) \right)$ indicating whether node $\tar$ is reachable from the source $\so$ at round $t$. Finally, for each of the $K$ selected seeds $\so \in \cS_t$, $\dilinucb$ updates the sufficient statistics  (lines $7$ and $8$ of \cref{algo:ucb}) and the UCB estimates (line $10$ of \cref{algo:ucb}). Here, $\Proj_{[0,1]}[\cdot]$ projects a real number onto the $[0,1]$ interval.
\section{Regret Bound}
\label{sec:bound}
In this section, we derive a regret bound for $\dilinucb$, under (1) Assumption~\ref{assum:monotone}, (2) perfect linear generalization i.e. $p^*_{\so,\tar} = \langle \vecw^{*}_{\so}, \vecx_{\tar} \rangle $ for all $\so, \tar \in \cV$, and (3) the assumption that 
$\vert \vert \vecx_{\tar} \vert \vert_{2} \leq 1$ for all $\tar \in \cV$. Notice that (2) is the standard assumption for linear bandit analysis \cite{dani2008stochastic}, and (3)
can always be satisfied by rescaling the target features.
Our regret bound is stated below:

\begin{theorem}
For any $\lambda, \sigma>0$, any feature matrix $X$, any $\alpha$-approximation oracle $\oracle$, and any $c$ satisfying
\begin{small}
\begin{flalign}
c \geq  \frac{1}{\sigma} \sqrt{dn \log \left( 1+ \frac{nT}{\sigma^2 \lambda d} \right)+ 2 \log \left(n^2 T \right)} + \sqrt{\lambda}   \max_{u \in \cV} \| \vecw^*_\so \|_2,
\label{ineq:c}
\end{flalign}
\end{small}
if we apply $\dilinucb$ with input $( \oracle, X, c, \lambda, \sigma )$, then its
$\rho \alpha$-scaled cumulative regret is upper-bounded as
\begin{small}
\begin{align}
R^{\rho \alpha}(T) \leq & \, \frac{2c}{\rho \alpha} n^{\frac{3}{2}} \sqrt{\frac{d K T \log \left ( 1 + \frac{n T}{d \lambda \sigma^2} \right )}{ \lambda \log \left(1+\frac{1}{\lambda \sigma^2} \right)}}+ \frac{1}{\rho}. 
\end{align}
\end{small}
\vspace{-0.15in}

For the tabular case $X=I$, we obtain a tighter bound
\vspace{-0.15in}
\begin{small}
\begin{align}
R^{\rho \alpha}(T) \leq & \, \frac{2c}{\rho \alpha} n^{\frac{3}{2}} \sqrt{\frac{ K T \log \left ( 1 + \frac{T}{\lambda \sigma^2} \right )}{ \lambda \log \left(1+\frac{1}{\lambda \sigma^2} \right)}}+ \frac{1}{\rho} .
\end{align}
\end{small}
\vspace{-0.15in}
\label{thm:main}
\end{theorem}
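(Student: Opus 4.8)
The plan is to establish, on a high-probability event, confidence intervals for the per-source ridge estimates, convert these (via the surrogate/oracle machinery of \cref{sec:approximation}) into a per-round regret bound, and then sum with an elliptical-potential argument. I treat the general case; the tabular case is the specialization $X=I$.

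\textbf{Step 1 (confidence intervals).} From the updates in \cref{algo:ucb}, writing $\bxi_{\so,s}=\vecy_{\so,s}-X^{\top}\vecw^{*}_{\so}$ for the centered observation vector at a round $s$ with $\so\in\cS_s$ (coordinatewise in $[-1,1]$ and conditionally mean-zero given the history under perfect linear generalization), one has the ridge identity $\hat\vecw_{\so,t}=\vecw^{*}_{\so}-\lambda\Sigma_{\so,t}^{-1}\vecw^{*}_{\so}+\sigma^{-2}\Sigma_{\so,t}^{-1}\sum_{s:\so\in\cS_s}X\bxi_{\so,s}$. Hence $\langle\hat\vecw_{\so,t},\vecx_{\tar}\rangle-p^{*}_{\so,\tar}$ splits into a ridge-bias term of size at most $\sqrt\lambda\,\norm{\vecw^{*}_{\so}}_2\,\norm{\vecx_{\tar}}_{\Sigma_{\so,t}^{-1}}$ (using $\Sigma_{\so,t}\succeq\lambda I$) plus $\norm{\vecx_{\tar}}_{\Sigma_{\so,t}^{-1}}$ times a self-normalized martingale quantity. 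A standard self-normalized martingale inequality, together with $\Tr(XX^{\top})\le n$ (which yields $\log\det\Sigma_{\so,t}-\log\det(\lambda I)\le d\log(1+nT/(d\lambda\sigma^2))$) and a union bound over the $\le n$ sources and the rounds/pairs (contributing the $\log(n^2T)$ term), shows that the choice of $c$ in \eqref{ineq:c} guarantees, on an event $\cE$ with $\PP{\cE}\ge 1-1/(n^2T)$, that $|\langle\hat\vecw_{\so,t},\vecx_{\tar}\rangle-p^{*}_{\so,\tar}|\le c\,\norm{\vecx_{\tar}}_{\Sigma_{\so,t}^{-1}}$ for all $\so,\tar\in\cV$ and all $t$. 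Since $p^{*}_{\so,\tar}\in[0,1]$ and clipping to $[0,1]$ cannot destroy an upper bound nor enlarge its gap, on $\cE$ the UCB used at round $t{+}1$ obeys $p^{*}_{\so,\tar}\le\pucb_{\so,\tar}\le p^{*}_{\so,\tar}+\min\{1,\,2c\,\norm{\vecx_{\tar}}_{\Sigma_{\so,t}^{-1}}\}$, with $\Sigma_{\so,t}$ the Gram matrix at $\so$'s most recent selection (it is unchanged between selections).

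\textbf{Step 2 (per-round regret).} Fix a realization in $\cE$. Since $f(\cS,p)=\sum_{\tar}\max_{\so\in\cS}p_{\so,\tar}$ is nondecreasing in $p$ and $\pucb\ge\ptrue$ entrywise, $\max_{\cS\in\cC}f(\cS,\pucb)\ge f(\tcS,\pucb)\ge f(\tcS,\ptrue)=\rho\,\spr(\cS^{*})$ (the last equality by the definitions of $\tcS$ and $\rho$). As $\cS_t=\oracle(\cG,\cC,\pucb)$ with $\oracle$ an $\alpha$-approximation, $f(\cS_t,\pucb)\ge\alpha\rho\,\spr(\cS^{*})$. The first part of \cref{thm:mpr_appr} gives $\spr(\cS_t)\ge f(\cS_t,\ptrue)$. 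Subtracting, $\rho\alpha\,\spr(\cS^{*})-\spr(\cS_t)\le f(\cS_t,\pucb)-f(\cS_t,\ptrue)=\sum_{\tar}\big(\max_{\so\in\cS_t}\pucb_{\so,\tar}-\max_{\so\in\cS_t}p^{*}_{\so,\tar}\big)$, and choosing $\so_t(\tar)\in\argmax_{\so\in\cS_t}\pucb_{\so,\tar}$ bounds the $\tar$-th term by $\pucb_{\so_t(\tar),\tar}-p^{*}_{\so_t(\tar),\tar}\le\min\{1,\,2c\,\norm{\vecx_{\tar}}_{\Sigma_{\so_t(\tar),t-1}^{-1}}\}$ via Step 1.

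\textbf{Step 3 (summation and conclusion).} Summing over $t$ and applying Cauchy–Schwarz over the $\le nT$ pairs $(t,\tar)$ gives $\sum_{t=1}^{T}(\rho\alpha\,\spr(\cS^{*})-\spr(\cS_t))\le\sqrt{nT}\,\big(\sum_{t,\tar}\min\{1,4c^2\norm{\vecx_{\tar}}_{\Sigma_{\so_t(\tar),t-1}^{-1}}^{2}\}\big)^{1/2}$. Regrouping the inner sum by source (each target is assigned to exactly one $\so\in\cS_t$) and enlarging it to $\sum_{\so}\sum_{t:\so\in\cS_t}\sum_{\tar\in\cV}\min\{1,4c^{2}\norm{\vecx_{\tar}}_{\Sigma_{\so,t-1}^{-1}}^{2}\}$, each source yields an elliptical-potential-type sum; using $\norm{\vecx_{\tar}}_{\Sigma_{\so,t-1}^{-1}}^{2}\le 1/\lambda$ and a log-determinant bound, the per-source contribution is of order $c^{2}d\log(1+nT/(d\lambda\sigma^2))/\big(\lambda\log(1+1/(\lambda\sigma^2))\big)$, and summing over the $\le n$ distinct sources ever selected, combined with the $\sqrt{nT}$ factor and the $\sqrt K$ from the per-round selection count, produces the stated $\tfrac{2c}{\rho\alpha}n^{3/2}\sqrt{dKT\log(1+nT/(d\lambda\sigma^2))/(\lambda\log(1+1/(\lambda\sigma^2)))}$ term. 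Finally $R^{\rho\alpha}(T)=\EE{T\,\spr(\cS^{*})-\tfrac1{\rho\alpha}\sum_t\spr(\cS_t)}$ is at most $\tfrac1{\rho\alpha}$ times this bound on $\cE$, plus $\PP{\cE^{c}}\cdot T\,\spr(\cS^{*})\le(1/(n^2T))\cdot nT=1/n\le 1/\rho$ on $\cE^{c}$, which is the additive term. In the tabular case $\Sigma_{\so,t}=(\lambda+\sigma^{-2}N_{\so,t})I$ is diagonal ($N_{\so,t}$ the number of selections of $\so$ up to $t$), so each source–target pair is a one-dimensional problem and $d\log(1+nT/(d\lambda\sigma^2))$ is replaced by $\log(1+T/(\lambda\sigma^2))$.

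\textbf{Main obstacle.} The non-routine ingredient is the batched, correlated feedback: one selection of $\so$ reveals all $n$ (possibly dependent) indicators $\vecy_{\so,t}(\tar)$ simultaneously, and they all update the single Gram matrix $\Sigma_{\so,t}$. This drives both Step 1 — the self-normalized inequality must be applied to the \emph{vector} increment $X\bxi_{\so,t}$ with variance-proxy matrix proportional to $XX^{\top}$, and the within-round dependence together with $\Tr(XX^{\top})\le n$ is exactly what inflates $d$ to $dn$ inside the confidence radius — and Step 3, where the $n$ rank-one updates within a round are all evaluated at the \emph{pre}-selection matrix rather than a running one, so the usual elliptical-potential lemma must be replaced by a direct eigenvalue computation bounding $\sum_k\Tr\big((\lambda I+k\sigma^{-2}XX^{\top})^{-1}XX^{\top}\big)$. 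Making this $n$-dependence tight is precisely what yields the paper's claimed improvement over prior IM semi-bandit regret bounds.
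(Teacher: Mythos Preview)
Your plan follows the paper's proof closely: a high-probability confidence event via a self-normalized bound for the matrix-valued increments $X\bxi_{\so,t}$ (exactly the paper's \cref{thm:self-normalization}), a per-round regret bound through the surrogate/oracle machinery, and an elliptical-potential summation. Steps~1 and~2 are essentially correct; Step~2 is even a slight refinement of the paper, which crudely bounds $\max_{u\in\cS_t}U_t(u,v)-\max_{u\in\cS_t}L_t(u,v)\le\sum_{u\in\cS_t}[U_t(u,v)-L_t(u,v)]$, summing over all $K$ seeds, whereas you keep only the single $\so_t(\tar)=\argmax_{u}\pucb_{u,\tar}$.

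Step~3, however, contains two bookkeeping errors. First, the per-source elliptical-potential sum $\sum_{t:\so\in\cS_t}\sum_{\tar}\|\vecx_\tar\|^2_{\Sigma_{\so,t-1}^{-1}}$ is of order $dn\log(\cdot)/(\lambda\log(\cdot))$, not $d\log(\cdot)/(\lambda\log(\cdot))$: the extra $n$ is precisely the batched-update cost you correctly flag in your ``main obstacle'' paragraph (in the paper's argument it enters through $\det[\Sigma_t]^n\ge\det[\Sigma_{t-1}]^n\prod_\tar(1+z_\tar^2/\sigma^2)$, i.e.\ the log-det telescopes only after raising to the $n$th power). Second, your ``$\sqrt K$ from the per-round selection count'' has no source in your derivation: after Cauchy--Schwarz over the $nT$ pairs $(t,\tar)$, enlarging, and using the corrected per-source bound, you obtain $\tfrac{2c}{\rho\alpha}n^{3/2}\sqrt{dT\log(\cdot)/(\lambda\log(\cdot))}$, with no $K$ anywhere. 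This is \emph{tighter} than the theorem by a factor $\sqrt K$---a genuine consequence of your sharper Step~2---so the theorem still follows a fortiori, but your accounting as written does not produce the claimed expression. (A minor point: with the stated $c$, the failure probability is $1/(nT)$, not $1/(n^2T)$; the $2\log(n^2T)$ already absorbs the union bound over the $n$ sources.)
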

Recall that $\rho$ specifies the quality of the surrogate approximation. Notice that if we choose $\lambda=\sigma=1$, and choose $c$ s.t. Inequality~\ref{ineq:c} is tight, then our regret bound is $\tilde{O}(n^2 d \sqrt{KT}/(\alpha \rho))$ for general feature matrix $X$, and
$\tilde{O}(n^{2.5} \sqrt{KT}/(\alpha \rho) )$ in the tabular case. Here the $\tilde{O}$ hides log factors. 
We now briefly discuss the tightness of our regret bounds. First, note that the $O(1/\rho)$ factor is due to the
surrogate objective approximation discussed in \cref{sec:approximation}, and the $O(1/\alpha)$ factor is due to the fact that $\oracle$ is an $\alpha$-approximation algorithm. Second, note that the 
$\tilde{O}(\sqrt{T})$-dependence on time is near-optimal, and the $\tilde{O}(\sqrt{K})$-dependence on the cardinality of the seed sets is standard in the combinatorial semi-bandit literature \cite{kveton2015tight}.
Third, for general $X$, notice that the $\tilde{O}(d)$-dependence on feature dimension is standard in linear bandit literature
\cite{dani2008stochastic, wen2015efficient}. To explain the $\tilde{O}(n^2)$ factor in this case, 
notice that one $O(n)$ factor is due to the magnitude of the reward (the reward is from $0$ to $n$, rather than $0$ to $1$),
whereas one $\tilde{O}(\sqrt{n})$ factor is due to the statistical dependence of the pairwise reachabilities.
Assuming statistical independence between these reachabilities (similar to~\citet{chen2016combinatorial}), we can shave off this $\tilde{O}(\sqrt{n})$ factor. However this assumption is unrealistic in practice. Another $\tilde{O}(\sqrt{n})$ is due to the fact that we learn one $\vecw^*_\so$ for each source node $\so$ (i.e. there is no generalization across the source nodes). Finally, for the tabular case $X=I$, the dependence on $d$ no longer exists, but there is another $\tilde{O}(\sqrt{n})$ factor due to the fact that there is no generalization across target nodes.

We conclude this section by sketching the proof for \cref{thm:main} (the detailed proof is available in 
\cref{sec:proof_for_main} and \cref{sec:matrix_conc}). We define the ``good event" as 
\begin{small}
\[
\cF=\{
| \vecx_{\tar}^T (\hat{\vecw}_{\so, t-1} - \vecw_{\so}^*)| \leq c \| \vecx_{\tar}\|_{\Sigma_{\so, t-1}^{-1}}
 \, \forall \so, \tar  \in \cV, \, t \leq T
 \},
\]
\end{small}
and the ``bad event" $\bar{\cF}$ as the complement of $\cF$.
We then decompose the $\rho \alpha$-scaled regret
$R^{\rho \alpha}(T)$ over $\cF$ and $\bar{\cF}$, and obtain the following inequality:
\begin{small}
\[
R^{\rho \alpha} (T) \leq
\frac{2c}{\rho \alpha} \E \left \{\sum_{t=1}^T \sum_{u \in \cS_t}  \sum_{\tar \in \cV}  \| \vecx_{\tar}\|_{\Sigma_{\so, t-1}^{-1}} \middle | \cF \right \} +
 \frac{P(\bar{\cF})}{\rho} nT,
\]
\end{small}
where $P(\bar{\cF})$ is the probability of $\bar{\cF}$. The regret bounds in \cref{thm:main}
are derived based on worst-case bounds on $\sum_{t=1}^T \sum_{u \in \cS_t}  \sum_{v \in \cV}  \| x_{\tar}\|_{\Sigma_{\so, t-1}^{-1}}$ (\cref{sec:worst-case}), and a bound on $P(\bar{\cF})$ based on the ``self-normalized bound for matrix-valued martingales" developed in~\cref{thm:self-normalization} (\cref{sec:matrix_conc}).


\section{Practical Implementation}
\label{sec:implementation}
In this section, we briefly discuss how to implement our proposed algorithm, $\dilinucb$, in practical semi-bandit IM problems.
Specifically, we will discuss how to construct features in \cref{subsec:features}, how to enhance the practical performance of
$\dilinucb$ based on Laplacian regularization in \cref{subsec:laplacian}, and how to implement $\dilinucb$ computationally efficiently
in real-world problems in \cref{subsec:scalability}.
\subsection{Target Feature Construction}
\label{subsec:features}
Although $\dilinucb$ is applicable with any target feature matrix $X$, in practice, its performance is highly dependent on the ``quality" of $X$. In this subsection, we motivate and propose a systematic feature construction approach based on the unweighted Laplacian matrix of the network topology $\cG$. For all $\so \in \cV$,  let $p^*_\so \in \Re^n$ be the vector encoding the reachabilities from the seed $\so$ to all the target nodes $\tar \in \cV$. Intuitively, $p^*_\so$ tends to be a smooth graph function in the sense that target nodes close to each other (e.g., in the same community) tend to have similar reachabilities from $\so$. From~\cite{belkin2006manifold,valko2014spectral}, we know that a smooth graph function (in this case, the reachability from a source) can be expressed as a linear combination of eigenvectors of the weighted Laplacian of the network. In our case, the edge weights correspond to influence probabilities and are unknown in the IM semi-bandit setting. However, we use the above intuition to construct target features based on the unweighted Laplacian of $\cG$.   Specifically, for a given $d=1,2,\ldots,n$, we set the feature matrix $X$ to be the bottom $d$ eigenvectors (associated with $d$ smallest eigenvalues) of the unweighted Laplacian of $\cG$. Other approaches to construct target features include the neighbourhood preserving node-level features as described in~\cite{grover2016node2vec,perozzi2014deepwalk}. We leave the investigation of other feature construction approaches to future work.

\subsection{Laplacian Regularization}
\label{subsec:laplacian}
One limitation of our proposed $\dilinucb$ algorithm is that it does not generalize across the seed nodes $\so$. Specifically, it needs to learn the source node feature $\vecw^*_\so$ for each source node $\so$ separately, which is inefficient for large-scale semi-bandit IM problems. Similar to target features, the source features also tend to be smooth in the sense that $\| \vecw^*_{\so_1} - \vecw^*_{\so_2} \|_2$ is ``small" if nodes $\so_1$ and $\so_2$ are adjacent. We use this idea to design a prior which ties together the source features for different nodes, and hence transfers information between them. This idea of Laplacian regularization has been used in  multi-task learning~\cite{evgeniou2005learning} and for contextual-bandits in~\cite{cesa2013gang,vaswani2017horde}. Specifically, at each round $t$, we compute $ \hat{\vecw}_{\so,t}$ by minimizing the following objective w.r.t $\vecw_{\so}$: 
\begin{flalign*}
\sum_{j = 1}^{t} \sum_{\so \in \cS_{t}}(\mathbf{y}_{\so,j} -  X^T {\vecw}_{\so})^{2} + \lambda_{2} \sum_{(\so_{1},\so_{2}) \in \cE} \vert \vert {\vecw}_{\so_{1}} - {\vecw}_{\so_{2}} \vert \vert_{2}^{2}
\end{flalign*}
where $\lambda_2 \geq 0$ is the regularization parameter. The implementation details are provided in \cref{sec:lap-reg}. 

\subsection{Computational Complexity}
\label{subsec:scalability}
We now characterize the computational complexity of $\dilinucb$, and discuss how to implement it efficiently. Note that at each time $t$, $\dilinucb$ needs to first compute a solution $\cS_t$ based on $\oracle$, and then update the UCBs. Since $\Sigma_{\so,t}$ is positive semi-definite, the linear system in line $9$ of \cref{algo:ucb} can be solved using conjugate gradient in $O(d^{2})$ time. It is straightforward to see the computational complexity to update the UCBs is $O(Knd^2)$. The computational complexity to compute $\cS_t$ is dependent on $\oracle$. For the classical setting in which $\cC=\{\cS \subseteq \cV :\, |\cS| \leq K\}$ and $\oracle$ is the greedy algorithm, the computational complexity is $O(Kn)$. To speed this up, we use the idea of lazy evaluations for submodular maximization proposed in~\cite{minoux1978accelerated,leskovec2007cost}. It is known that this results in improved running time in practice. 

\newcommand{\cucb}{{\tt CUCB}}
\section{Experiments}
\label{sec:experiments}
\subsection{Empirical Verification of Surrogate Objective}
\label{sec:empirical_objective}
\begin{figure}[!ht]
\centering
        \subfigure[]
        {
			\includegraphics[width=0.2\textwidth,height=3cm]{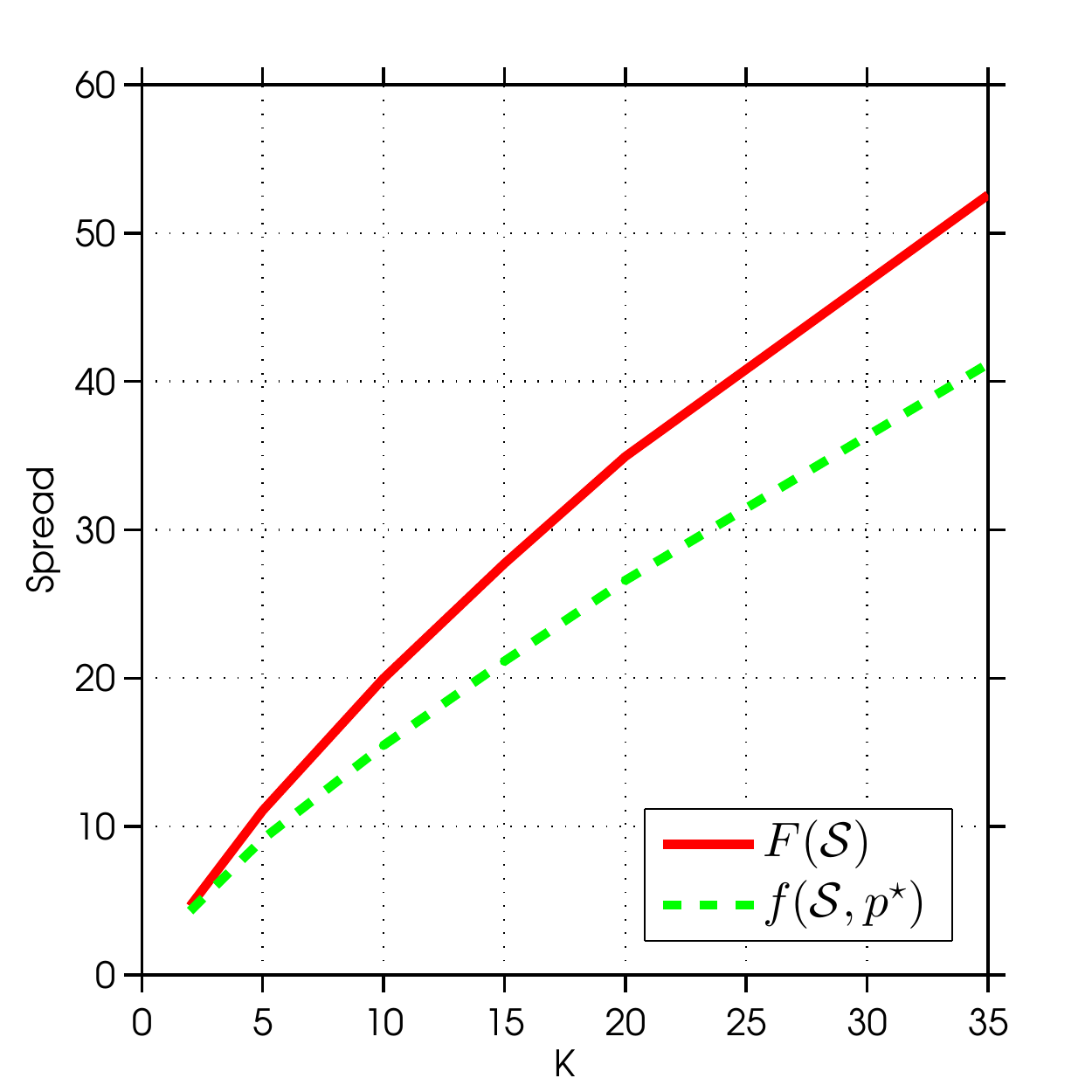}
			\label{fig:ass-1}
        }          
        \subfigure[]
        {
			\includegraphics[width=0.2\textwidth,height=3cm]{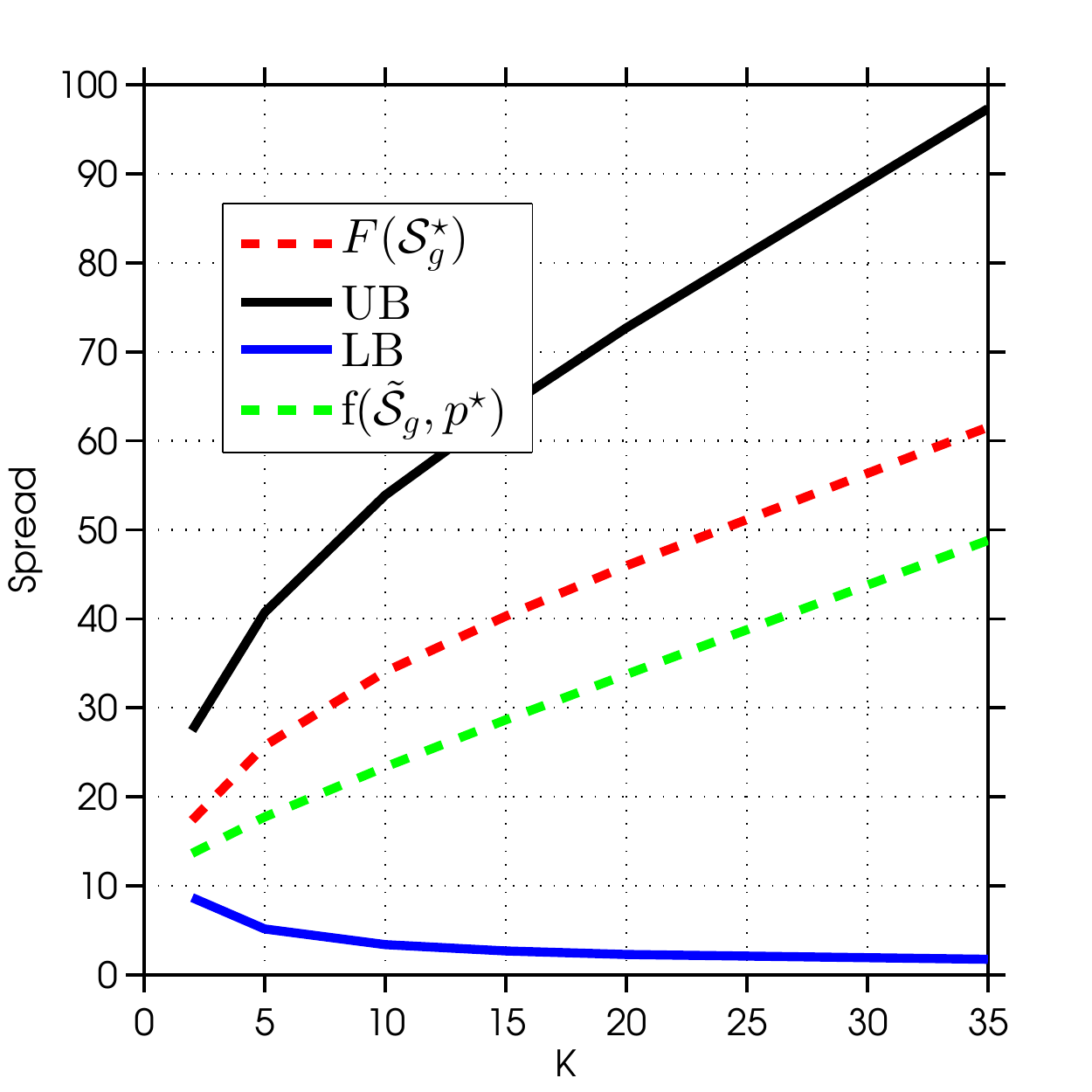}
			\label{fig:ass-2}
        }          
        \caption{Experimental verification of surrogate objective.}      	
\end{figure} 
In this subsection, we empirically verify that the surrogate $f(\cS, p^*)$ proposed in \cref{sec:approximation} is a good approximation of the true IM objective $\spr(\cS)$. We conduct our tests on 
random Kronecker graphs, which are known to capture many properties of real-world social networks~\cite{leskovec2010kronecker}. Specifically, we  generate a \emph{social network instance} $(\cG, \cD)$ as follows: we randomly sample $\cG$ as a Kronecker graph with $n=256$ and \emph{sparsity} equal to $0.03$ \footnote{Based on the sparsity of typical social networks.} \cite{leskovec2005realistic}. We choose $\cD$ as the IC model and sample each of its influence probabilities independently from the uniform distribution $U(0,0.1)$. Note that this range of influence probabilities is guided by the empirical evidence in~\cite{goyal2010learning,barbieri2013topic}. To weaken the dependence on a particular instance, all the results in this subsection are averaged over $10$ randomly generated instances.

We first numerically estimate the pairwise reachabilities $p^*$ for all $10$ instances based on social network simulation.
In a simulation, we randomly sample a seed set $\cS$ with cardinality $K$ between $1$ and $35$, and record the pairwise influence indicator $\vecy_{\so}(\tar)$ from each source $\so \in \cS$ to each target node $\tar$ in this simulation.
The reachability $p^*_{\so, \tar}$ is estimated by averaging the $\vecy_{\so}(\tar)$ values across $50$k such simulations.

Based on the $p^*$ so estimated, we compare $f(\cS, p^*)$ and $\spr(\cS)$ as $K$, the seed set cardinality, varies from $2$ to $35$.
For each $K$ and each social network instance, we randomly sample $100$ seed sets $\cS$ with cardinality $K$.
Then, we evaluate $f(\cS, p^*)$ based on the estimated $p^*$; and numerically evaluate $\spr(\cS)$ by averaging results of
$500$ influence simulations (diffusions).  For each $K$, we average both $\spr(\cS)$ and $f(\cS, p^*)$ across the random seed sets in each instance as well as across the $10$ instances. We plot the average $\spr(\cS)$ and $f(\cS, p^*)$ as a function of $K$ in~\cref{fig:ass-1}.
The plot shows that $f(\cS)$ is a good lower bound on the true expected spread $\spr(\cS)$, especially for low $K$. 

Finally, we empirically quantify the surrogate approximation factor $\rho$. As before, we vary $K$ from $2$ to $35$ and average across
$10$ instances. Let $\alpha^*=1-e^{-1}$.
For each instance and each $K$, we first use the estimated $p^*$ and the greedy algorithm to find an $\alpha^*$-approximation 
solution $\tcS_g$ to the surrogate problem $\max_{\cS} f(\cS, p^*)$. We then use the state-of-the-art IM algorithm~\cite{Tang2014Influence} to compute an $\alpha^*$-approximation 
solution $\cS^*_g$ to the IM problem $\max_{\cS} \spr(\cS)$.
Since $\spr(\cS^*_g) \geq \alpha \spr(\Sopt)$~\cite{nemhauser1978analysis}, $\mathrm{UB}\stackrel{\Delta}{=}\spr(\cS^*_g)/\alpha^*$ is an upper bound on $\spr(\Sopt)$. From \cref{thm:mpr_appr}, $\mathrm{LB}\stackrel{\Delta}{=}\spr(\cS^*_g)/K \leq \spr(\Sopt)/K$ is a lower bound on $f(\tcS, p^*)$. We plot the average values (over $10$ instances) of $\spr(\cS^*_g)$, $f(\tcS_g, p^*)$,  $\mathrm{UB}$
and $\mathrm{LB}$ against $K$ in \cref{fig:ass-2}. We observe that the difference in spreads does not increase rapidly with $K$. Although $\rho$ is lower-bounded with $\frac{1}{K}$, in practice for all $K \in [2,35]$, $\rho \geq \frac{\alpha^* f(\tcS_g, p^*) }{\spr(\cS^*_g)} \geq 0.55$. This shows that in practice, our surrogate approximation is reasonable even for large $K$. 

\subsection{Performance of $\dilinucb$}
\label{sec:performance}
\begin{figure*}[!ht]
\centering
        \subfigure[IC Model]
        {
			\includegraphics[width=0.3\textwidth,height=4cm]{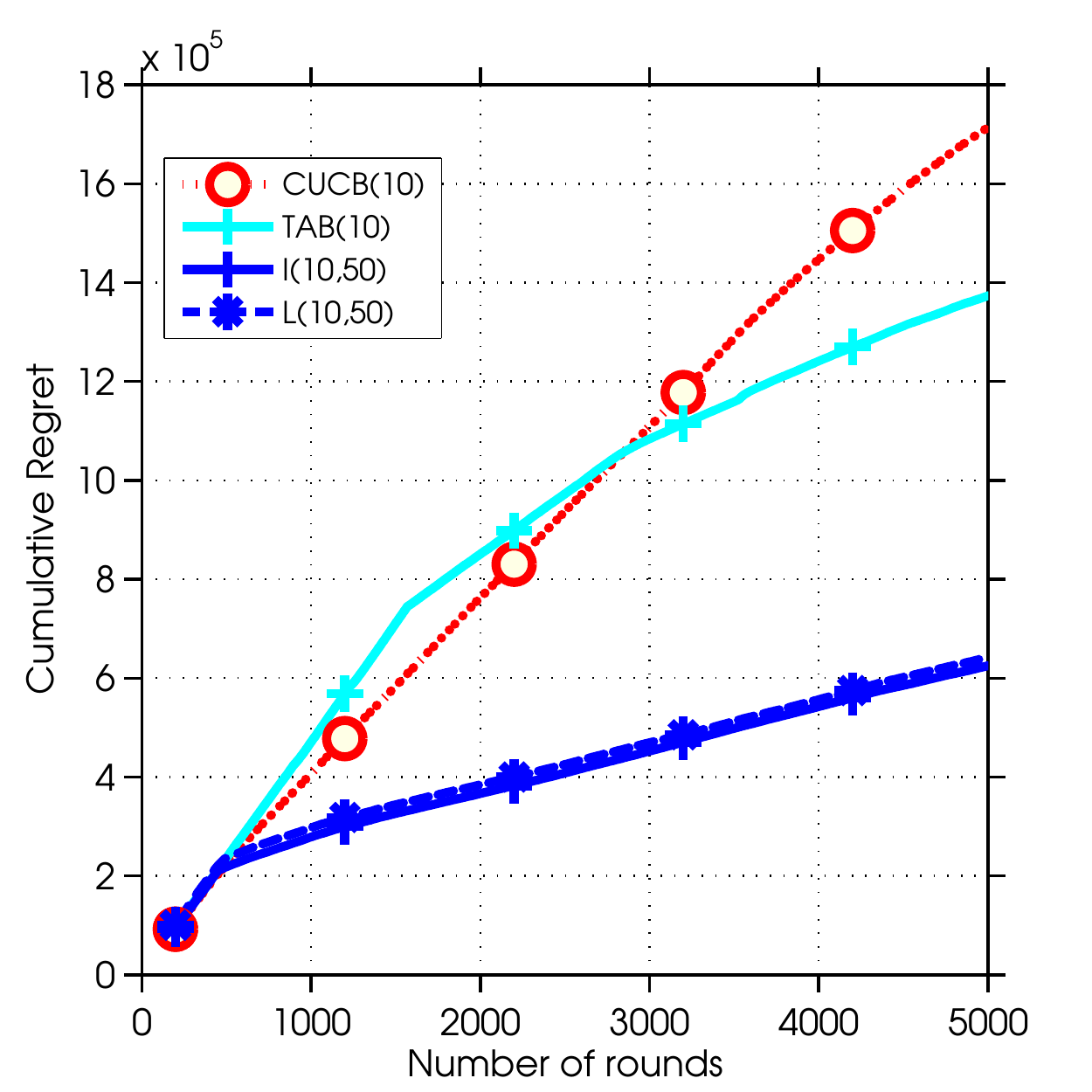}
			\label{fig:IC-facebook-cumulative-regret}
        }          
        \subfigure[LT Model]
        {
			\includegraphics[width=0.3\textwidth,height=4cm]{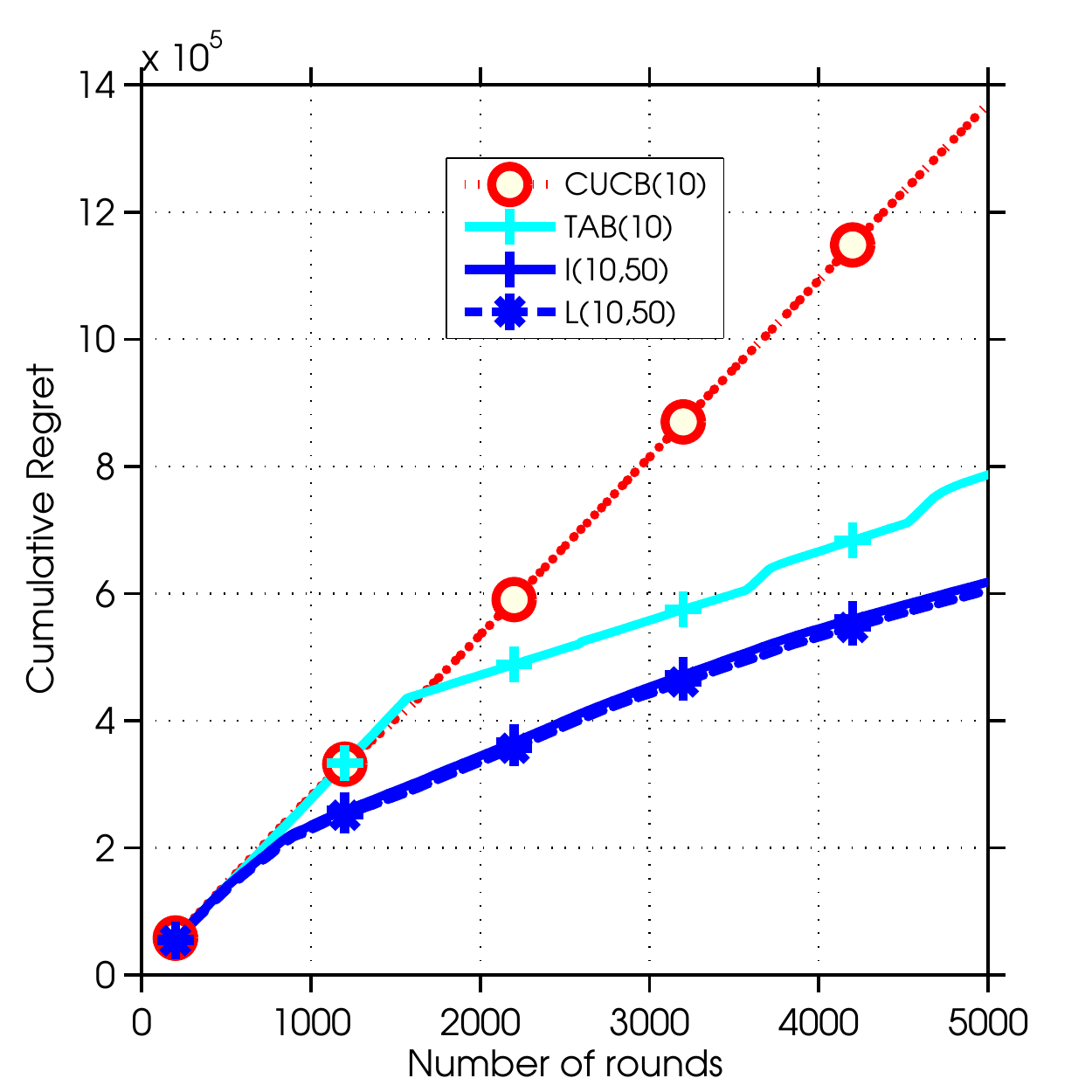}
			\label{fig:LT-facebook-cumulative-regret}
        }          
        \caption{Comparing $\dilinucb$ and $\cucb$ on the Facebook subgraph with $K=10$.}      	
\end{figure*}    

We now demonstrate the performances of variants of $\dilinucb$ and compare them with the start of the art.
We choose the social network topology $\cG$ as a subgraph of the Facebook network available at~\cite{snapnets},
which consists of $n = 4$k nodes and $m = 88$k edges. Since true diffusion model is unavailable, we assume the
diffusion model $\cD$ is either an IC model or an LT model, and sample the edge influence probabilities independently from the
uniform distribution $U(0,0.1)$. We also choose $T = 5$k rounds.

We compare $\dilinucb$ against the $\cucb$ algorithm~\cite{chen2016combinatorial} in both the IC model and the LT model, with $K=10$.
$\cucb$ (referred to as CUCB($K$) in plots) assumes the IC model, edge-level feedback and learns the influence probability for each edge independently.  We demonstrate the performance of three variants of $\dilinucb$ - the tabular case with $X=I$, independent estimation for each source node using target features (\cref{algo:ucb}) and Laplacian regularized estimation with target features (\cref{sec:lap-reg}). In the subsequent plots, to emphasize the dependence on $K$ and $d$, these are referred to as TAB($K$), I($K$,$d$) and L($K$,$d$) respectively. We construct features as described in \cref{subsec:features}. Similar to spectral clustering~\cite{von2007tutorial}, the gap in the eigenvalues of the unweighted Laplacian can be used to choose the number of eigenvectors $d$. In our case, we choose the bottom $d = 50$ eigenvectors for constructing target features and show the effect of varying $d$ in the next experiment. Similar to~\cite{gentile2014online}, all hyper-parameters for our algorithm are set using an initial validation set of $500$ rounds. The best validation performance was observed for  $\lambda = 10^{-4}$ and $\sigma = 1$. 

We now briefly discuss the performance metrics used in this section. For all $\cS \subseteq \cV$ and all $t=1,2\ldots$, we define
$r_t(\cS)=\sum_{\tar \in \cV} I \left( \cS, \tar, \cD(\bw_t) \right) $, which is the realized reward at time $t$ if $\cS$ is chosen at that time. One performance metric is the \emph{per-step reward}. Specifically, in one simulation, the per-step reward at time $t$ is defined as 
$\frac{\sum_{s = 1}^{t} r_{s}}{t}$. Another performance metric is the \emph{cumulative regret}.
Since it is computationally intractable to derive $\Sopt$, our regret is measured with respect to $\cS^*_g$, the $\alpha^*$-approximation
solution discussed in \cref{sec:empirical_objective}. In one simulation, the cumulative regret at time $t$ is defined as
$R(t)=\sum_{s=1}^t \left[ r_s(\cS^*_g)- r_s(\cS_s) \right]$. All the subsequent results are averaged across $5$ independent simulations.


\begin{figure*}[!ht]
\centering
        \subfigure[Effect of $d$ in IC]
        {
			\includegraphics[width=0.3\textwidth,height=4cm]{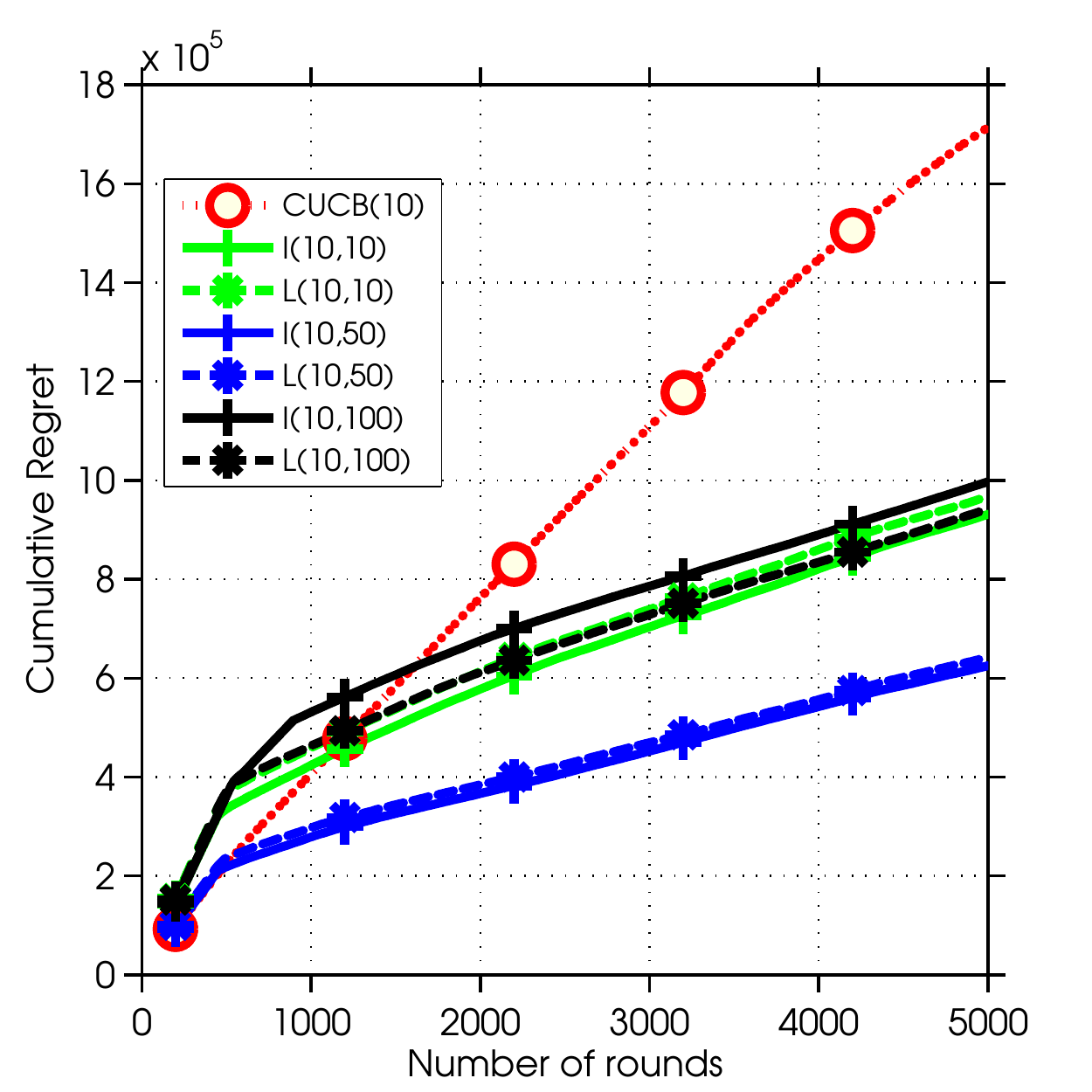}
			\label{fig:IC-facebook-d-variation}
        }          
        \subfigure[Effect of $K$ in IC]
        {
			\includegraphics[width=0.3\textwidth,height=4cm]{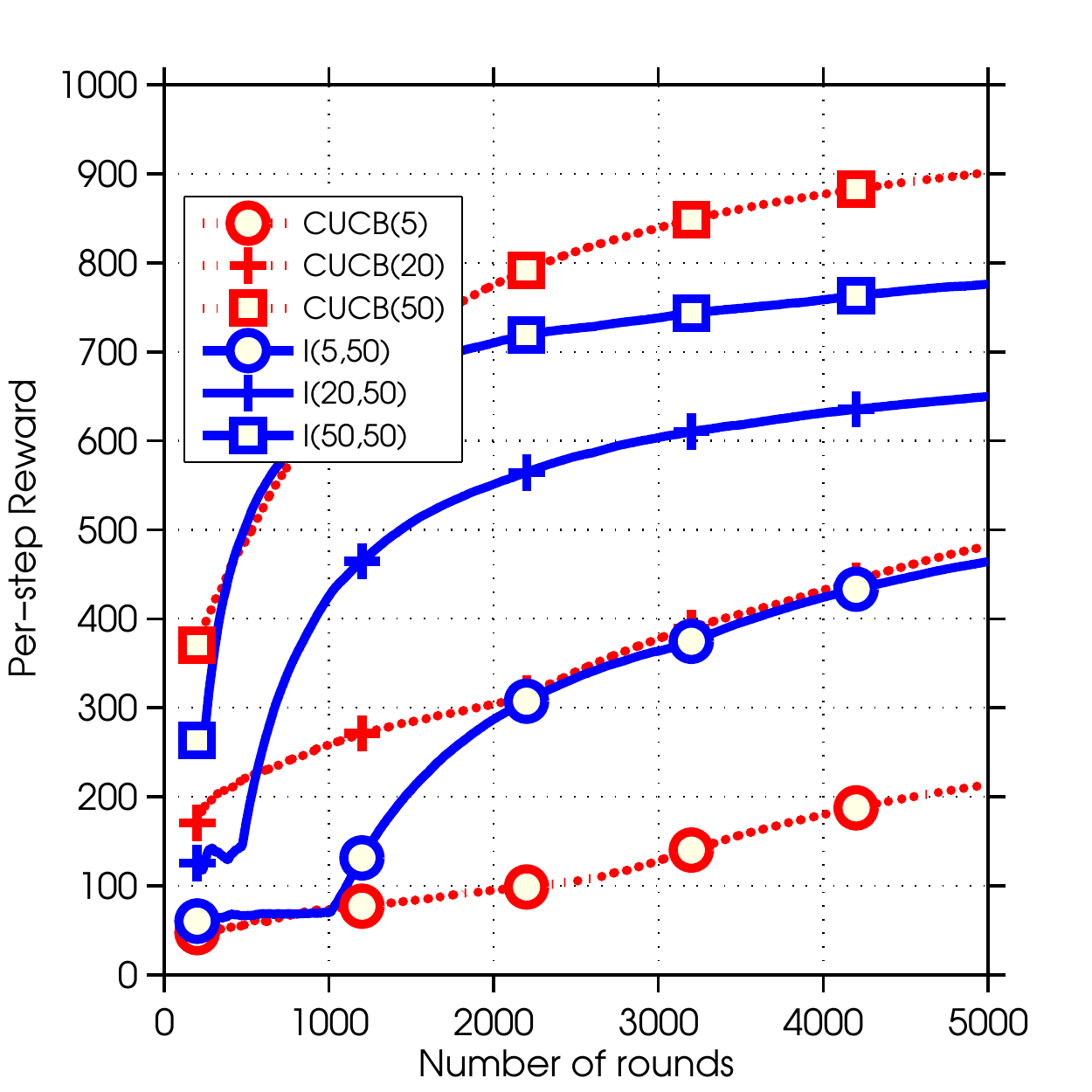}
			\label{fig:IC-facebook-k-variation}
        }          
        \subfigure[Effect of $K$ in LT]
        {
			\includegraphics[width=0.3\textwidth,height=4cm]{facebook-combined-reward-k-variation.pdf}
			\label{fig:LT-facebook-k-variation}
        }          
        
        \caption{Effects of varying $d$ or $K$.}      	
\end{figure*}    
Figures~\ref{fig:IC-facebook-cumulative-regret} and~\ref{fig:LT-facebook-cumulative-regret} show the cumulative regret when the underlying diffusion model is IC and LT, respectively. We have the following observations: (i) As compared to $\cucb$, the cumulative regret increases at a slower rate for all variants of $\dilinucb$, under both the IC and LT models, and for both the tabular case and case with features. (ii) Exploiting target features (linear generalization) in $\dilinucb$ leads to a much smaller cumulative regret. (iii) $\cucb$ is not robust to model misspecification: it has a near linear cumulative regret under LT model. (iv) Laplacian regularization has little effect on the cumulative regret in these two cases. These observations clearly demonstrate the two main advantages of $\dilinucb$: it is both statistically efficient and robust to diffusion model misspecification. To explain (iv), we argue that the current combination of $T$, $K$, $d$ and $n$ results in sufficient feedback for independent estimation to perform well and hence it is difficult to observe any additional benefit of Laplacian regularization. We provide additional evidence for this argument in the next experiment. 

In \cref{fig:IC-facebook-d-variation}, we quantify the effect of varying $d$ when the underlying diffusion model is IC and make the following observations: (i) The cumulative regret for both $d = 10$ and $d = 100$ is higher than that for $d = 50$. (ii) Laplacian regularization leads to observably lower cumulative regret when $d = 100$. Observation (iii) implies that $d = 10$ does not provide enough expressive power for linear generalization across the nodes of the network, whereas it is relatively difficult to estimate $100$-dimensional $\vecw^*_{\so}$ vectors within $5$k rounds. Observation (iv) implies that tying source node estimates together imposes an additional bias which becomes important while learning higher dimensional coefficients. This shows the potential benefit of using Laplacian regularization for larger networks, where we will need higher $d$ for linear generalization across nodes. We obtain similar results under the LT model. 

In Figures~\ref{fig:IC-facebook-k-variation} and~\ref{fig:LT-facebook-k-variation}, we show the effect of varying $K$ on the per-step reward. We compare $\cucb$ and the independent version of our algorithm when the underlying model is IC and LT. We make the following observations: (i) For both IC and LT, the per-step reward for all methods increases with $K$. (ii) For the IC model, the per-step reward for our algorithm is higher than $\cucb$ when $K = \{5,10,20\}$, but the difference in the two spreads decreases with $K$. For $K = 50$, $\cucb$ outperforms our algorithm. (iii) For the LT model, the per-step reward of our algorithm is substantially higher than $\cucb$ for all $K$. Observation (i) is readily explained since both IC and LT are progressive models, and satisfy Assumption~\ref{assum:monotone}. To explain (ii), note that $\cucb$ is correctly specified for the IC model. As $K$ becomes higher, more edges become active and $\cucb$ observes more feedback. It is thus able to learn more efficiently, leading to a higher per-step reward compared to our algorithm when $K = 50$. Observation (iii) again demonstrates that $\cucb$ is not robust to diffusion model misspecification, while $\dilinucb$ is.

\section{Related Work}
\label{sec:related-work}
IM semi-bandits have been studied in several recent papers~\cite{wen2016influence, chen2016combinatorial,vaswani2015influence,carpentier2016revealing}. \citet{chen2016combinatorial} studied IM semi-bandit under edge-level feedback and the IC diffusion model. They formulated it as a combinatorial multi-armed bandit problem and proposed a UCB algorithm ($\cucb$). They only consider the tabular case, and derive an $O(n^3)$ regret bound that also depends on the reciprocal of the minimum observation probability $p$ of an edge. This can be problematic in for example, a line graph with $L$ edges where all edge weights are $0.5$. Then $1/p$ is $2^{L - 1}$, implying an exponentially large regret. Moreover, they assume that source nodes influence the target nodes independently, which is not true in most practical social networks.
In contrast, both our algorithm and analysis are diffusion independent, and our analysis does not require the ``independent influence" assumption made in \cite{chen2016combinatorial}. Our regret bound is $O(n^{2.5})$ in the tabular case and $O(n^{2}d)$ in the general linear bandit case. \citet{vaswani2015influence} use $\epsilon$-greedy and Thompson sampling algorithms for a different and more challenging feedback model, where the learning agent observes influenced nodes but not the edges. They do not give any theoretical guarantees. Concurrent to our work, \citet{wen2016influence} consider a linear generalization model across edges and prove regret bounds under edge-level feedback. Note that all of the above papers assume the IC diffusion model.

\citet{carpentier2016revealing,fang2014networked} consider a simpler local model of influence, in which information does not transitively diffuse across the network. \citet{lei2015online} consider the related, but different, problem of maximizing the number of unique activated nodes across multiple rounds. They do not provide any theoretical analysis.

\vspace*{-2ex}
\section{Conclusion}
\label{sec:discussion}
In this paper, we described a novel model-independent parametrization and a corresponding surrogate objective function for the IM problem. We used this parametrization to propose $\dilinucb$, a diffusion-independent learning algorithm for IM semi-bandits. We conjecture that with an appropriate generalization across source nodes, it may be possible to get a more statistically efficient algorithm and get rid of an additional $O(\sqrt{n})$ factor in the regret bound. In the future, we hope to address alternate bandit algorithms such Thompson sampling, and feedback models such as node-level in~\citet{vaswani2015influence}.

\textbf{Acknowledgements:} This research was supported by the Natural Sciences and Engineering Research Council of Canada.

\clearpage


\clearpage

\onecolumn

\appendix
\begin{center}
{\Large \textbf{Appendices}}
\end{center}

\section{Proof of \cref{thm:mpr_appr}}
\label{sec:mpr_appr}

\begin{proof}
\cref{thm:mpr_appr} can be proved based on the definitions of monotonicity and submodularity. Note that from Assumption~\ref{assum:monotone}, for any seed set $\cS \in \cC$, any seed node $u \in \cS$, and any target node $v \in \cV$, we have
$
\spr( \{ u\}, v) \leq \spr(\cS, v)
$,
which implies that
\[
f (\cS, v, p^*) = \max_{u \in \cS} \spr(\{ u\}, v) \leq \spr(\cS, v),
\]
hence
\[
f (\cS, p^*) = \sum_{v \in \cV} f (\cS, v, p^*) \leq \sum_{v \in \cV} \spr(\cS, v) = \spr(\cS).
\]
This proves the first part of \cref{thm:mpr_appr}.

We now prove the second part of the theorem. First, note that from the first part, we have
\[
f(\tilde{\cS}, p^*) \leq \spr(\tilde{\cS}) \leq \spr(\seeds^*),
\]
where the first inequality follows from the first part of this theorem, and the second inequality follows from the definition of
$\seeds^*$. Thus, we have $\rho \leq 1$.
To prove that $\rho \geq 1/K$, we assume that $\cS=\left \{u_1, u_2, \ldots, u_K \right \}$, and define
$\cS_k = \left \{u_1, u_2, \ldots, u_k \right \}$ for $k=1,2,\ldots, K$. 
Thus, for any $\cS \subseteq \cV$ with $|\cS|=K$, we have
\begin{align*}
\spr(\seeds) =&\, \spr(\cS_1) + \sum_{k=1}^{K-1} \left[ \spr(\cS_{k+1}) -  \spr(\cS_k) \right] \\
\leq & \, \sum_{k=1}^K \spr (\{u_k \}) = \sum_{k=1}^K \sum_{v \in \cV} \spr (\{u_k \}, v) \\
\leq & \, \sum_{v \in \cV} K \max_{u \in \seeds}  \spr (\{u \}, v)
= K \sum_{v \in \cV} f(\cS, v, p^*) = K f(\cS, p^*),
\end{align*}
where the first inequality follows from the submodularity of $\spr(\cdot)$. Thus we have
\[
\spr(\seeds^*) \leq K f(\seeds^*, p^*) \leq K f(\tilde{\seeds}, p^*),
\]
where the second inequality follows from the definition of $\tilde{\seeds}$. This implies that $\rho \geq 1/K$.
\end{proof}


\section{Proof of \cref{thm:main}}
\label{sec:proof_for_main}
We start by defining some useful notations. We use $\cH_t$ to denote the ``history" by the end of time $t$. For any node pair $(u,v) \in \cV \times \cV$
and any time $t$, we define the upper confidence bound (UCB) $U_t(u,v)$ and the lower confidence bound (LCB) $L_t(u,v)$ respectively as
\begin{align}
U_t (u,v) =& \,  \Proj_{[0,1]} \bigg( \langle \hat{\vecw}_{\so,t-1}, \vecx_{v} \rangle + c \sqrt{ \vecx_{v}^{T} \Sigma_{\so,t-1}^{-1} \vecx_{v}} \bigg)
\nonumber \\
L_t (u,v) =& \,  \Proj_{[0,1]} \bigg( \langle \hat{\vecw}_{\so,t-1}, \vecx_{v} \rangle - c \sqrt{ \vecx_{v}^{T} \Sigma_{\so,t-1}^{-1} \vecx_{v}} \bigg)
\end{align}
Notice that $U_t$ is the same as the UCB estimate $\pucb$ defined in \cref{algo:ucb}.
Moreover, we define the ``good event" $\cF$ as
\begin{align}
\cF = \left \{
|x_{\tar}^T (\hat{\vecw}_{\so, t-1} - \vecw_{\so}^*)| \leq c \sqrt{x_{\tar}^T \Sigma_{\so, t-1}^{-1} x_{\tar}}, \, \forall \so, \tar  \in \cV, \, \forall t \leq T
\right \},
\end{align}
and the ``bad event" $\bar{\cF}$ as the complement of $\cF$. 

\subsection{Regret Decomposition}
Recall that the realized scaled regret at time $t$ is $R_t^{\rho \alpha}=\spr(\Sopt)-\frac{1}{\rho \alpha}\spr(\cS_t)$, thus we have
\begin{align}
R_t^{\rho \alpha}= & \, \spr(\Sopt)-\frac{1}{\rho \alpha}\spr(\cS_t) 
\stackrel{(a)}{=} \, \frac{1}{\rho} f(\tilde{\cS}, p^*) - \frac{1}{\rho \alpha}\spr(\cS_t) 
\stackrel{(b)}{\leq} \, \frac{1}{\rho} f(\tilde{\cS}, p^*) - \frac{1}{\rho \alpha} f(\cS_t, p^*),
\end{align}
where equality (a) follows from the definition of $\rho$ (i.e. $\rho$ is defined as $\rho=f(\tilde{\cS}, p^*)/\spr(\Sopt)$), and
inequality (b) follows from $f(\cS_t, p^*) \leq \spr(\cS_t)$ (see \cref{thm:mpr_appr}).
Thus, we have
\begin{align}
R^{\rho \alpha} (T) = & \, \E \left[\sum_{t=1}^T R_t^{\rho \alpha} \right] \nonumber \\
 \leq & \,  \frac{1}{\rho} \E \left \{ \sum_{t=1}^T \left[ f(\tilde{\cS}, p^*) -  f(\cS_t, p^*)/\alpha \right] \right \} \nonumber \\
 = & \,  \frac{P(\cF)}{\rho} \E \left \{ \sum_{t=1}^T \left[ f(\tilde{\cS}, p^*) -  f(\cS_t, p^*)/\alpha \right] \middle | \cF \right \} +
 \frac{P(\bar{\cF})}{\rho} \E \left \{ \sum_{t=1}^T \left[ f(\tilde{\cS}, p^*) -  f(\cS_t, p^*)/\alpha \right] \middle | \bar{\cF} \right \} \nonumber \\
 \leq & \, \frac{1}{\rho} \E \left \{ \sum_{t=1}^T \left[ f(\tilde{\cS}, p^*) -  f(\cS_t, p^*) /\alpha \right] \middle | \cF \right \} +
 \frac{P(\bar{\cF})}{\rho} nT,
\end{align}
where the last inequality follows from the naive bounds $P(\cF) \leq 1$ and $f(\tilde{\cS}, p^*) -  f(\cS_t, p^*)/ \alpha \leq n$.
Notice that under ``good" event $\cF$, we have
\begin{equation}
\label{eqn:append:LU}
L_t(u,v) \leq p^*_{uv} = x_v^T \theta^*_u \leq U_t(u,v) 
\end{equation}
for all node pair $(u,v)$ and for all time $t \leq T$. 
Thus, we have
$f(\cS, L_t) \leq f(\cS, p^*) \leq f(\cS, U_t)$ for all $\cS$ and $t \leq T$ under event $\cF$.
So under event $\cF$, we have
\[
f(\cS_t, L_t) \stackrel{(a)}{\leq} f(\cS_t, p^*) \stackrel{(b)}{\leq} f(\tilde{\cS},  p^*) \stackrel{(c)}{\leq} f(\tilde{\cS}, U_t) \leq 
\max_{\cS \in \cC} f(\cS, U_t) \stackrel{(d)}{\leq} \frac{1}{\alpha} f(\cS_t, U_t)
\]
for all $t \leq T$,
where inequalities (a) and (c) follow from \eqref{eqn:append:LU}, inequality (b) follows from
$\tilde{\cS} \in \argmax_{\cS \in \cC} f(\cS, p^*)$,
and inequality (d) follows from 
the fact that $\oracle$ is an $\alpha$-approximation algorithm.
Specifically, the fact that $\oracle$ is an $\alpha$-approximation algorithm implies that
$f(\cS_t, U_t) \geq \alpha \max_{\cS \in \cC} f(\cS, U_t)$.

Consequently, under event $\cF$, we have
\begin{align}
f(\tilde{\cS}, p^*) -  \frac{1}{\alpha}f(\cS_t, p^*) \leq & \, \frac{1}{\alpha} f(\cS_t, U_t) -  \frac{1}{\alpha} f(\cS_t, L_t) \nonumber \\
=& \, \frac{1}{\alpha} \sum_{v \in \cV} \left[\max_{u \in \cS_t} U_t(u,v) -\max_{u \in \cS_t} L_t(u,v)  \right] \nonumber \\
\leq & \, \frac{1}{\alpha} \sum_{v \in \cV} \sum_{u \in \cS_t} \left[U_t(u,v) - L_t(u,v)  \right] \nonumber \\
\leq & \, \sum_{v \in \cV} \sum_{u \in \cS_t} \frac{2c}{\alpha} \sqrt{x_{\tar}^T \Sigma_{\so, t-1}^{-1} x_{\tar}}.
\end{align}
So we have
\begin{align}
R^{\rho \alpha} (T) \leq
\frac{2c}{\rho \alpha} \E \left \{\sum_{t=1}^T \sum_{u \in \cS_t}  \sum_{v \in \cV}  \sqrt{x_{\tar}^T \Sigma_{\so, t-1}^{-1} x_{\tar}} \middle | \cF \right \} +
 \frac{P(\bar{\cF})}{\rho} nT.
\end{align}
In the remainder of this section, we will provide a worst-case bound on
$\sum_{t=1}^T \sum_{u \in \cS_t}  \sum_{v \in \cV}  \sqrt{x_{\tar}^T \Sigma_{\so, t-1}^{-1} x_{\tar}}$ (see \cref{sec:worst-case})
and a bound on the probability of ``bad event" $P(\bar{\cF})$ (see \cref{sec:bound_on_P}).

\subsection{Worst-Case Bound on $\sum_{t=1}^T \sum_{u \in \cS_t}  \sum_{v \in \cV}  \sqrt{x_{\tar}^T \Sigma_{\so, t-1}^{-1} x_{\tar}}$}
\label{sec:worst-case}
Notice that 
\[
\sum_{t=1}^T \sum_{u \in \cS_t}  \sum_{v \in \cV}  \sqrt{x_{\tar}^T \Sigma_{\so, t-1}^{-1} x_{\tar}}
=
\sum_{u \in \cV} \sum_{t=1}^T  \mathbf{1}\left[ u \in \cS_t \right] \sum_{v \in \cV}  \sqrt{x_{\tar}^T \Sigma_{\so, t-1}^{-1} x_{\tar}}
\]
For each $u \in \cV$, we define
$K_u = \sum_{t=1}^T  \mathbf{1}\left[ u \in \cS_t \right]$ 
as the number of times at which $u$ is chosen as a source node, then we have the following
lemma:
\begin{lemma}
For all $u \in \cV$, we have
\[ \sum_{t=1}^T  \mathbf{1}\left[ u \in \cS_t \right] \sum_{v \in \cV}  \sqrt{x_{\tar}^T \Sigma_{\so, t-1}^{-1} x_{\tar}} \leq 
 \sqrt{n K_u} \sqrt{\frac{dn \log \left ( 1 + \frac{n K_u}{d \lambda \sigma^2} \right )}{ \lambda \log \left(1+\frac{1}{\lambda \sigma^2} \right)}}.
\]
Moreover, when $X=I$, we have
\[
\sum_{t=1}^T  \mathbf{1}\left[ u \in \cS_t \right] \sum_{v \in \cV}  \sqrt{x_{\tar}^T \Sigma_{\so, t-1}^{-1} x_{\tar}} \leq 
\sqrt{n K_u} \sqrt{\frac{n \log \left ( 1 + \frac{K_u}{ \lambda \sigma^2} \right )}{ \lambda \log \left(1+\frac{1}{\lambda \sigma^2} \right)}}.
\]
\end{lemma}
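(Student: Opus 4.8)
The plan is to fix a source node $u \in \cV$ and reduce the left-hand side to a standard ``elliptical potential'' sum over only the rounds in which $u$ is selected. Let $t_1 < t_2 < \dots < t_{K_u}$ be those rounds. By lines~13--16 of \cref{algo:ucb}, $\Sigma_{u,\cdot}$ is unchanged whenever $u \notin \cS_t$, while line~8 adds $\sigma^{-2}XX^\top$ whenever $u$ is selected; hence $\Sigma_{u,t_j-1} = A_{j-1}$, where $A_0 = \lambda I_d$ and $A_k = A_{k-1} + \sigma^{-2}XX^\top$ for $k \ge 1$. So the quantity to bound is $\sum_{j=1}^{K_u}\sum_{v\in\cV}\sqrt{x_v^\top A_{j-1}^{-1}x_v}$, and I only ever need to reason about the increasing matrix sequence $A_0 \preceq A_1 \preceq \cdots \preceq A_{K_u}$.

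Next I would apply Cauchy--Schwarz twice. For fixed $j$, $\sum_{v}\sqrt{x_v^\top A_{j-1}^{-1}x_v} \le \sqrt{n}\,\big(\sum_v x_v^\top A_{j-1}^{-1}x_v\big)^{1/2} = \sqrt{n}\,\big(\Tr(A_{j-1}^{-1}XX^\top)\big)^{1/2}$, using $\Tr(A_{j-1}^{-1}XX^\top) = \sum_v x_v^\top A_{j-1}^{-1}x_v$. Summing over $j$ and applying Cauchy--Schwarz again over the $K_u$ rounds, $\sum_{j=1}^{K_u}\sqrt{n\,\Tr(A_{j-1}^{-1}XX^\top)} \le \sqrt{nK_u}\,\big(\Phi\big)^{1/2}$, where $\Phi := \sum_{j=1}^{K_u}\Tr(A_{j-1}^{-1}XX^\top)$ is the potential that remains to be controlled.

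To bound $\Phi$, write $\sigma^{-2}XX^\top = A_j - A_{j-1}$, so $\Phi = \sigma^2\sum_{j=1}^{K_u}\Tr\!\big(A_{j-1}^{-1}(A_j-A_{j-1})\big)$. The eigenvalues $\mu_1,\dots,\mu_d \ge 1$ of $A_{j-1}^{-1/2}A_jA_{j-1}^{-1/2} = I + \sigma^{-2}A_{j-1}^{-1/2}XX^\top A_{j-1}^{-1/2}$ satisfy $\Tr(A_{j-1}^{-1}(A_j-A_{j-1})) = \sum_i(\mu_i-1)$ and $\log\det A_j - \log\det A_{j-1} = \sum_i\log\mu_i$. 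Since $A_{j-1}\succeq\lambda I_d$ implies $x_v^\top A_{j-1}^{-1}x_v \le \lambda^{-1}\|x_v\|_2^2 \le \lambda^{-1}$, the added term $\sigma^{-2}A_{j-1}^{-1/2}XX^\top A_{j-1}^{-1/2}$ has trace $\sigma^{-2}\sum_v x_v^\top A_{j-1}^{-1}x_v \le n/(\lambda\sigma^2)$, so $\mu_i \le \mu_{\max} := 1 + n/(\lambda\sigma^2)$ for all $i,j$. Concavity of $\log$ (the graph lies above the chord on $[1,\mu_{\max}]$) gives $\mu_i-1 \le \tfrac{\mu_{\max}-1}{\log\mu_{\max}}\log\mu_i$; summing over $i$ and telescoping over $j$ yields $\Phi \le \sigma^2\tfrac{\mu_{\max}-1}{\log\mu_{\max}}\big(\log\det A_{K_u} - d\log\lambda\big)$. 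Then AM--GM on the eigenvalues of $A_{K_u} = \lambda I_d + K_u\sigma^{-2}XX^\top$, together with $\Tr(XX^\top) = \sum_v\|x_v\|_2^2 \le n$, gives $\log\det A_{K_u} - d\log\lambda \le d\log\!\big(1 + \tfrac{nK_u}{d\lambda\sigma^2}\big)$. Substituting $\mu_{\max}$ and weakening $\log(1+n/(\lambda\sigma^2)) \ge \log(1+1/(\lambda\sigma^2))$ in the denominator gives $\Phi \le \tfrac{dn}{\lambda}\cdot\tfrac{\log(1+nK_u/(d\lambda\sigma^2))}{\log(1+1/(\lambda\sigma^2))}$, and plugging into the Cauchy--Schwarz bound proves the first inequality. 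For $X=I$ the same argument is tighter at two spots: $\|XX^\top\|=1$ makes $\mu_{\max} = 1+1/(\lambda\sigma^2)$ exactly (no denominator weakening needed), and $\det A_{K_u} = (\lambda + K_u\sigma^{-2})^n$ exactly (no AM--GM), yielding $\sqrt{nK_u}\sqrt{n\log(1+K_u/(\lambda\sigma^2))/(\lambda\log(1+1/(\lambda\sigma^2)))}$.

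The step that needs the most care is the potential bound: converting the telescoping trace sum $\sum_j\Tr(A_{j-1}^{-1}(A_j-A_{j-1}))$ into a telescoping log-determinant sum requires the uniform eigenvalue bound $\mu_{\max}$, and it is precisely the estimate $\|XX^\top\| \le n$ (versus $\|XX^\top\| = 1$ in the tabular case) that explains the gap between the general and tabular bounds. Everything else is routine Cauchy--Schwarz and AM--GM.
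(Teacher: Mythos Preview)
Your proof is correct and follows the same overall architecture as the paper: Cauchy--Schwarz over $(t,v)$ to reduce to a sum of squared norms, an elliptical-potential/log-determinant telescoping argument to bound that sum, the trace--determinant (AM--GM) inequality to control $\log\det A_{K_u}$, and concavity of $\log$ to convert the increments. The one methodological difference is in how the potential bound is obtained. The paper bounds each target separately by dropping all rank-one terms except $x_v x_v^\top$, giving $\det\Sigma_t \ge \det\Sigma_{t-1}\big(1+\sigma^{-2}x_v^\top\Sigma_{t-1}^{-1}x_v\big)$ for every $v$, then multiplies these $n$ inequalities and telescopes $n\log\det\Sigma_t$; the chord inequality is then applied to each $z_{t,v}^2\in[0,1/\lambda]$. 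You instead keep the full update $A_j=A_{j-1}+\sigma^{-2}XX^\top$, work with the eigenvalues of $A_{j-1}^{-1/2}A_jA_{j-1}^{-1/2}$, bound them uniformly by $1+n/(\lambda\sigma^2)$ via the trace, and apply the chord inequality on $[1,\mu_{\max}]$. Both routes arrive at the same constants; yours is the more ``textbook'' elliptical-potential argument, while the paper's per-target drop-and-multiply trick makes the tabular equality $\det\Sigma_t=\det\Sigma_{t-1}\prod_v(1+\sigma^{-2}z_{t,v}^2)$ appear directly without a separate case analysis.
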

\begin{proof}
To simplify the exposition, we use $\Sigma_t$ to denote $\Sigma_{\so, t}$, and define
$z_{t,v} =  \sqrt{x_{\tar}^T \Sigma_{\so, t-1}^{-1} x_{\tar}}$ for all $t \leq T$ and all $v \in \cV$.
Recall that
\[
\Sigma_t = \Sigma_{t-1} +  \frac{\mathbf{1}\left[ u \in \cS_t \right]}{\sigma^2} X X^T
=  \Sigma_{t-1} +  \frac{\mathbf{1}\left[ u \in \cS_t \right]}{\sigma^2} \sum_{v \in \cV} x_{\tar} x_{\tar}^T.
\]
Note that if $ u \notin \cS_t$, $\Sigma_t = \Sigma_{t-1}$.
If $u \in \cS_t$, then for any $\tar \in \cV$, we have
\begin{align}
\det \left[ \Sigma_t \right] \geq & \, \det \left[\Sigma_{t-1} + \frac{1}{\sigma^2}   x_{\tar} x_{\tar}^T \right ] \nonumber \\
= & \, \det \left[ \Sigma_{t-1}^{\frac{1}{2}} \left( I + \frac{1}{\sigma^2}   \Sigma_{t-1}^{-\frac{1}{2}} x_{\tar} x_{\tar}^T \Sigma_{t-1}^{-\frac{1}{2}} \right)\Sigma_{t-1}^{\frac{1}{2}}\right] \nonumber \\
= & \, \det \left[\Sigma_{t-1} \right]
\det \left[  I + \frac{1}{\sigma^2}   \Sigma_{t-1}^{-\frac{1}{2}} x_{\tar} x_{\tar}^T \Sigma_{t-1}^{-\frac{1}{2}} \right] \nonumber \\
= & \, \det \left[\Sigma_{t-1} \right] \left( 1+ \frac{1}{\sigma^2}  x_{\tar}^T \Sigma_{t-1}^{-1} x_{\tar} \right) =
\det \left[\Sigma_{t-1} \right] \left( 1+ \frac{z_{t-1, \tar}^2}{\sigma^2}  \right). \nonumber
\end{align}
Hence, we have
\begin{equation}
\det \left[ \Sigma_t \right]^n \geq \det \left[\Sigma_{t-1} \right]^n \prod_{\tar \in \cV} \left( 1+ \frac{z_{t-1, \tar}^2}{\sigma^2}  \right).
\end{equation}
Note that the above inequality holds for any $X$. However, if $X=I$, then all $\Sigma_t$'s are diagonal and we have
\begin{equation}
\label{eqn:append:worst:tabular}
\det \left[ \Sigma_t \right] = \det \left[\Sigma_{t-1} \right] \prod_{\tar \in \cV} \left( 1+ \frac{z_{t-1, \tar}^2}{\sigma^2}  \right).
\end{equation}
As we will show later, this leads to a tighter regret bound in the tabular ($X=I$) case.

Let's continue our analysis for general $X$. The above results imply that
\begin{equation}
n \log \left( \det \left[ \Sigma_t \right ]\right) \geq n \log \left( \det \left[ \Sigma_{t-1} \right ]\right) + \mathbf{1} \left( u \in \cS_t \right) \sum_{\tar \in \cV} \log \left( 1+ \frac{z_{t-1, \tar}^2}{\sigma^2}  \right) \nonumber
\end{equation}
and hence
\begin{align}
n \log \left( \det \left[ \Sigma_T \right ]\right) \geq & \, n \log \left( \det \left[ \Sigma_{0} \right ]\right) + \sum_{t=1}^T \mathbf{1} \left( u \in \cS_t \right) \sum_{\tar \in \cV} \log \left( 1+ \frac{z_{t-1, \tar}^2}{\sigma^2}  \right) \nonumber \\
= & \, nd \log(\lambda) + \sum_{t=1}^T \mathbf{1} \left( u \in \cS_t \right) \sum_{\tar \in \cV} \log \left( 1+ \frac{z_{t-1, \tar}^2}{\sigma^2}  \right).
\end{align}
On the other hand, we have that
\begin{align}
\Tr \left[ \Sigma_T \right] =& \, \Tr \left[ \Sigma_0 + \sum_{t=1}^T \frac{\mathbf{1}\left[ u \in \cS_t \right]}{\sigma^2} \sum_{\tar \in \cV} x_{\tar} x_{\tar}^T \right]
\nonumber \\
=& \, \Tr \left[ \Sigma_0 \right] + \sum_{t=1}^T \frac{\mathbf{1}\left[ u \in \cS_t \right]}{\sigma^2} \sum_{\tar \in \cV} \Tr \left[ x_{\tar} x_{\tar}^T \right]
\nonumber \\
=& \, \lambda d +  \sum_{t=1}^T \frac{\mathbf{1}\left[ u \in \cS_t \right]}{\sigma^2} \sum_{\tar \in \cV} \|  x_{\tar} \|^2 \leq
\lambda d +\frac{n K_u}{\sigma^2},
\end{align}
where the last inequality follows from the assumption that $\|x_{\tar} \| \leq 1$ and the definition of $K_u$.
From the trace-determinant inequality, we have $\frac{1}{d} \Tr \left[ \Sigma_T \right] \geq \det \left[ \Sigma_T \right]^{\frac{1}{d}}$. Thus,
we have
\[
dn \log \left ( \lambda + \frac{n K_u}{d \sigma^2} \right ) \geq
dn \log \left ( \frac{1}{d} \Tr \left[ \Sigma_T \right]  \right ) \geq n \log \left( \det \left[ \Sigma_T \right] \right)
\geq  dn \log(\lambda) + \sum_{t=1}^T \mathbf{1} \left( u \in \cS_t \right) \sum_{\tar \in \cV} \log \left( 1+ \frac{z_{t-1, \tar}^2}{\sigma^2}  \right).
\]
That is
\[
\sum_{t=1}^T \mathbf{1} \left( u \in \cS_t \right) \sum_{\tar \in \cV} \log \left( 1+ \frac{z_{t-1, \tar}^2}{\sigma^2}  \right) \leq 
dn \log \left ( 1 + \frac{n K_u}{d \lambda \sigma^2} \right ) 
\]
Notice that $z^2_{t-1, \tar} = x_{\tar}^T \Sigma_{t-1}^{-1} x_{\tar} \leq x_{\tar}^T \Sigma_{0}^{-1} x_{\tar} = \frac{\|x_{\tar} \|^2}{\lambda} \leq 
\frac{1}{\lambda}$. Moreover, for all $y \in [0, 1/\lambda]$, we have
$\log \left(1+\frac{y}{\sigma^2} \right) \geq \lambda \log \left(1+\frac{1}{\lambda \sigma^2} \right) y$ based on the concavity of 
$\log (\cdot)$. Thus, we have
\[
 \lambda \log \left(1+\frac{1}{\lambda \sigma^2} \right) \sum_{t=1}^T \mathbf{1} \left( u \in \cS_t \right) \sum_{\tar \in \cV} z_{t-1, \tar}^2 \leq 
 dn \log \left ( 1 + \frac{n K_u}{d \lambda \sigma^2} \right ) .
\]
Finally, from Cauchy-Schwarz inequality, we have that
\[
\sum_{t=1}^T \mathbf{1} \left( u \in \cS_t \right) \sum_{\tar \in \cV} z_{t-1, \tar}
\leq \sqrt{n K_u} \sqrt{\sum_{t=1}^T \mathbf{1} \left( u \in \cS_t \right) \sum_{\tar \in \cV} z_{t-1, \tar}^2}.
\] 
Combining the above results, we have
\begin{equation}
\sum_{t=1}^T \mathbf{1} \left( u \in \cS_t \right) \sum_{\tar \in \cV} z_{t-1, \tar}
\leq \sqrt{n K_u} \sqrt{\frac{dn \log \left ( 1 + \frac{n K_u}{d \lambda \sigma^2} \right )}{ \lambda \log \left(1+\frac{1}{\lambda \sigma^2} \right)}}.
\end{equation}
This concludes the proof for general $X$. Based on \eqref{eqn:append:worst:tabular}, the analysis for the tabular ($X=I$) case
is similar, and we omit the detailed analysis. In the tabular case, we have
\begin{equation}
\sum_{t=1}^T \mathbf{1} \left( u \in \cS_t \right) \sum_{\tar \in \cV} z_{t-1, \tar}
\leq \sqrt{n K_u} \sqrt{\frac{n \log \left ( 1 + \frac{K_u}{ \lambda \sigma^2} \right )}{ \lambda \log \left(1+\frac{1}{\lambda \sigma^2} \right)}}.
\end{equation}
\end{proof}
We now develop a worst-case bound. Notice that for general $X$, we have
\begin{align}
\sum_{u \in \cV} \sum_{t=1}^T  \mathbf{1}\left[ u \in \cS_t \right] \sum_{v \in \cV}  \sqrt{x_{\tar}^T \Sigma_{\so, t-1}^{-1} x_{\tar}} 
\leq & \,
\sum_{u \in \cV} \sqrt{n K_u} \sqrt{\frac{dn \log \left ( 1 + \frac{n K_u}{d \lambda \sigma^2} \right )}{ \lambda \log \left(1+\frac{1}{\lambda \sigma^2} \right)}}
\nonumber \\
\stackrel{(a)}{\leq} & \, 
n \sqrt{\frac{d \log \left ( 1 + \frac{n T}{d \lambda \sigma^2} \right )}{ \lambda \log \left(1+\frac{1}{\lambda \sigma^2} \right)}} \sum_{u \in \cV} \sqrt{ K_u} 
\nonumber \\
\stackrel{(b)}{\leq} & \, n \sqrt{\frac{d \log \left ( 1 + \frac{n T}{d \lambda \sigma^2} \right )}{ \lambda \log \left(1+\frac{1}{\lambda \sigma^2} \right)}} \sqrt{n} \sqrt{\sum_{u \in \cV} K_u} \nonumber \\
\stackrel{(c)}{=} & n^{\frac{3}{2}} \sqrt{\frac{d K T \log \left ( 1 + \frac{n T}{d \lambda \sigma^2} \right )}{ \lambda \log \left(1+\frac{1}{\lambda \sigma^2} \right)}},
\end{align}
where inequality (a) follows from the naive bound $K_u \leq T$, inequality (b) follows from Cauchy-Schwarz inequality, and equality (c) follows from
$\sum_{\so \in \cV} K_u = KT$. Similarly, for the special case with $X=I$, we have
\begin{align}
\sum_{u \in \cV} \sum_{t=1}^T  \mathbf{1}\left[ u \in \cS_t \right] \sum_{v \in \cV}  \sqrt{x_{\tar}^T \Sigma_{\so, t-1}^{-1} x_{\tar}} 
\leq & \,
\sum_{u \in \cV} \sqrt{n K_u} \sqrt{\frac{n \log \left ( 1 + \frac{K_u}{ \lambda \sigma^2} \right )}{ \lambda \log \left(1+\frac{1}{\lambda \sigma^2} \right)}}
\leq  \, 
n^{\frac{3}{2}} \sqrt{ \frac{KT \log \left ( 1 + \frac{T}{ \lambda \sigma^2} \right )}{ \lambda \log \left(1+\frac{1}{\lambda \sigma^2} \right)}} .
\end{align}
This concludes the derivation of a worst-case bound.

\subsection{Bound on $P \left( \bar{\cF}\right)$}
\label{sec:bound_on_P}
We now derive a bound on $P \left( \bar{\cF}\right)$ based on the ``Self-Normalized Bound for Matrix-Valued Martingales" developed in~\cref{thm:self-normalization} (see \cref{thm:self-normalization}). Before proceeding, we define
$\cF_\so$ for all $\so \in \cV$ as
\begin{align}
\cF_{\so} = \left \{
|x_{\tar}^T (\hat{\vecw}_{\so, t-1} - \vecw_{\so}^*)| \leq c \sqrt{x_{\tar}^T \Sigma_{\so, t-1}^{-1} x_{\tar}}, \, \forall  \tar  \in \cV, \, \forall t \leq T
\right \},
\end{align}
and the  $\bar{\cF}_\so$ as the complement of $\cF_\so$. Note that by definition, $\bar{\cF}=\bigcup_{\so \in \cV} \bar{\cF}_\so$.
Hence, we first develop a bound on $P \left( \bar{\cF}_\so \right)$, then we develop a bound on
$P \left( \bar{\cF} \right)$ based on union bound.
\begin{lemma}
For all $\so \in \cV$, all $ \sigma, \lambda>0$, all $ \delta \in (0,1)$, and all
\[
c \geq  \frac{1}{\sigma} \sqrt{dn \log \left( 1+ \frac{nT}{\sigma^2 \lambda d} \right)+ 2 \log \left(\frac{1}{\delta} \right)}  + \sqrt{\lambda}  \|   \vecw^*_{\so} \|_2
\]
we have $P \left( \bar{\cF}_\so \right) \leq \delta$.
\end{lemma}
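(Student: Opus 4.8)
The plan is to reduce the event $\cF_\so$ to a single self-normalized concentration inequality for a matrix-valued martingale, and then invoke \cref{thm:self-normalization}. Fix the source node $\so$ and let $\cT=\{s\le T:\so\in\cS_s\}$; since $\cS_s$ is chosen from past data before $\bw_s$ is drawn, $\mathds 1[\so\in\cS_s]$ is measurable with respect to the history $\cH_{s-1}$. By definition of the pairwise influence feedback, for every $s\in\cT$ we have $\vecy_{\so,s}(\tar)=\mathds 1(\{\so\},\tar,\cD(\bw_s))$, and since $\bw_s$ is independent of $\cH_{s-1}$,
\begin{align*}
\E\!\left[\vecy_{\so,s}(\tar)\mid\cH_{s-1}\right]=\spr(\{\so\},\tar)=p^*_{\so,\tar}=\vecx_\tar^\top\vecw^*_\so,
\end{align*}
using perfect linear generalization. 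Hence $\bxi_{\so,s}:=\vecy_{\so,s}-X^\top\vecw^*_\so\in[-1,1]^n$ is a conditionally zero-mean noise vector. Unrolling the updates of \cref{algo:ucb} gives $\Sigma_{\so,t}=\lambda I_d+\sigma^{-2}\sum_{s\in\cT,\,s\le t}XX^\top$ and $\vecb_{\so,t}=\sum_{s\in\cT,\,s\le t}X\vecy_{\so,s}$, and substituting $\vecy_{\so,s}=X^\top\vecw^*_\so+\bxi_{\so,s}$ together with $\sigma^{-2}\sum_{s}XX^\top=\Sigma_{\so,t}-\lambda I_d$ yields the standard decomposition
\begin{align*}
\hat{\vecw}_{\so,t}-\vecw^*_\so=-\lambda\Sigma_{\so,t}^{-1}\vecw^*_\so+\sigma^{-2}\Sigma_{\so,t}^{-1}S_{\so,t},\qquad S_{\so,t}:=\sum_{s\in\cT,\,s\le t}X\bxi_{\so,s}.
\end{align*}

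Next I would control $|\vecx_\tar^\top(\hat{\vecw}_{\so,t-1}-\vecw^*_\so)|$ by Cauchy--Schwarz in the $\Sigma_{\so,t-1}^{-1}$ inner product: it is at most $\|\vecx_\tar\|_{\Sigma_{\so,t-1}^{-1}}\,\|\hat{\vecw}_{\so,t-1}-\vecw^*_\so\|_{\Sigma_{\so,t-1}}$, and I bound the second factor by the triangle inequality applied to the decomposition above. Since $\Sigma_{\so,t-1}\succeq\lambda I_d$, the bias term contributes $\lambda\sqrt{(\vecw^*_\so)^\top\Sigma_{\so,t-1}^{-1}\vecw^*_\so}\le\sqrt\lambda\,\|\vecw^*_\so\|_2$. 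Writing $\bar{V}_{\so,t}:=\sigma^2\Sigma_{\so,t}=\sigma^2\lambda I_d+\sum_{s\in\cT,\,s\le t}XX^\top$, the noise term contributes $\sigma^{-2}\|S_{\so,t-1}\|_{\Sigma_{\so,t-1}^{-1}}=\sigma^{-1}\|S_{\so,t-1}\|_{\bar{V}_{\so,t-1}^{-1}}$. Therefore $\cF_\so$ holds as soon as $\|S_{\so,t}\|_{\bar{V}_{\so,t}^{-1}}^2\le dn\log\!\big(1+\tfrac{nT}{\sigma^2\lambda d}\big)+2\log(1/\delta)$ for all $t\le T$, because then $\|\hat{\vecw}_{\so,t-1}-\vecw^*_\so\|_{\Sigma_{\so,t-1}}\le\sqrt\lambda\,\|\vecw^*_\so\|_2+\tfrac1\sigma\sqrt{dn\log(1+\tfrac{nT}{\sigma^2\lambda d})+2\log(1/\delta)}\le c$ by the hypothesis on $c$.

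It remains to establish this self-normalized bound on $\|S_{\so,t}\|_{\bar{V}_{\so,t}^{-1}}$. The pair $(S_{\so,t},\bar{V}_{\so,t})_{t}$ is exactly a matrix-valued self-normalized martingale with $\R^n$-valued, conditionally zero-mean, bounded-coordinate increments $\bxi_{\so,s}$, predictable feature matrix $X$, and regularizer $\sigma^2\lambda I_d$, so \cref{thm:self-normalization} gives, with probability at least $1-\delta$ and simultaneously for all $t$, a bound of the form $\|S_{\so,t}\|_{\bar{V}_{\so,t}^{-1}}^2\le n\log\!\big(\det(\bar{V}_{\so,t})/\det(\sigma^2\lambda I_d)\big)+2\log(1/\delta)$. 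To turn this into the expression above, I would bound the log-determinant by the trace--determinant (AM--GM) inequality: since $\|\vecx_\tar\|_2\le1$, $\Tr(XX^\top)=\|X\|_F^2=\sum_{\tar}\|\vecx_\tar\|_2^2\le n$, so $\Tr(\bar{V}_{\so,t})\le\sigma^2\lambda d+|\cT|\,n\le\sigma^2\lambda d+Tn$, and $\det(\bar{V}_{\so,t})\le(\Tr(\bar{V}_{\so,t})/d)^d$ gives $\det(\bar{V}_{\so,t})/\det(\sigma^2\lambda I_d)\le(1+\tfrac{Tn}{\sigma^2\lambda d})^d$. Substituting yields precisely $dn\log(1+\tfrac{nT}{\sigma^2\lambda d})+2\log(1/\delta)$, so the event of the previous paragraph, and hence $\cF_\so$, holds with probability at least $1-\delta$; that is, $P(\bar{\cF}_\so)\le\delta$.

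The main obstacle is \cref{thm:self-normalization} itself, specifically the appearance of the factor $n$ in front of the log-determinant (equivalently, the extra $\sqrt n$ in $c$). One cannot get away with treating the $n$ target-node observations at a round as $n$ conditionally independent scalar observations, because all of $\bxi_{\so,s}(\tar)$, $\tar\in\cV$, are measurable functions of the single diffusion vector $\bw_s$ and are in general correlated; the pseudo-maximization / method-of-mixtures argument has to be carried out for the genuinely vector-valued increments $X\bxi_{\so,s}$, and the $\sqrt n$ comes from applying Hoeffding's lemma to $\langle a,X\bxi_{\so,s}\rangle$, whose range scales like $\|X^\top a\|_1\le\sqrt n\,\|X^\top a\|_2$. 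Everything else in this lemma is routine linear algebra and bookkeeping with the $\sigma$ and $\lambda$ normalizations.
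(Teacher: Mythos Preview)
Your proposal is correct and follows essentially the same route as the paper: the same bias--noise decomposition of $\hat{\vecw}_{\so,t}-\vecw^*_\so$, Cauchy--Schwarz in the $\Sigma_{\so,t-1}^{-1}$ inner product, the bias bounded by $\sqrt{\lambda}\|\vecw^*_\so\|_2$, the noise handled via \cref{thm:self-normalization}, and the trace--determinant inequality to close. The only cosmetic difference is that the paper absorbs the factor $n$ into the design matrix by setting $\bar{V}_t=n\sigma^2\Sigma_{\so,t}$ so as to match the $V+K\sum X_sX_s^\top$ format of \cref{thm:self-normalization} verbatim (with $K=n$); if you apply the theorem exactly as stated with your scaling you pick up $2n\log(1/\delta)$ rather than $2\log(1/\delta)$, but this same harmless slack appears between the paper's own intermediate step and the lemma's stated threshold on $c$.
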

\begin{proof}
To simplify the expositions, we omit the subscript $u$ in this proof. For instance, we use $\mathbf{\theta}^*$, $\Sigma_t$, $\mathbf{y}_t$ and
$\mathbf{b}_t$ to respectively denote $\mathbf{\theta}^*_{\so}$, $\Sigma_{\so, t}$, $\mathbf{y}_{\so, t}$ and
$\mathbf{b}_{\so, t}$. We also use $\cH_t$ to denote the ``history" by the end of time $t$, and hence
$ \left \{ \cH_t \right \}_{t=0}^{\infty}$ is a filtration. Notice that $U_t$ is $\cH_{t-1}$-adaptive, and hence
$\cS_t$ and $\mathbf{1}\left[ u \in \cS_t \right]$ are also $\cH_{t-1}$-adaptive. We define
\begin{equation}
\eta_t = \left \{
\begin{array}{ll}
\mathbf{y}_t - X^T \mathbf{\theta}^* & \textrm{if $u \in \cS_t$} \\
0 & \textrm{otherwise}
\end{array}
\right. \in \Re^n
\quad
\textrm{ and }
\quad 
X_t = \left \{
\begin{array}{ll}
X & \textrm{if $u \in \cS_t$} \\
0 & \textrm{otherwise}
\end{array}
\right. \in \Re^{d \times n}
\end{equation}
Note that $X_t$ is $\cH_{t-1}$-adaptive, and $\eta_t$ is $\cH_t$-adaptive.
Moreover, $\| \eta_t\|_{\infty} \leq 1$ always holds, and 
$ \E \left[ \eta_t \middle | \cH_{t-1} \right]=0$. To simplify the expositions, we further define $\mathbf{y}_t=0$ for all $t$
s.t. $u \notin \cS_t$. Note that with this definition, we have $\eta_t = \mathbf{y}_t - X_t^T \mathbf{\theta}^* $ for all $t$.
We further define
\begin{align}
\bar{V}_t =& \, n \sigma^2 \Sigma_t = n \sigma^2 \lambda I + n \sum_{s=1}^t X_s X_s^T \nonumber \\
\bar{S}_t = & \, \sum_{s=1}^t X_s \eta_s = \sum_{s=1}^t X_s \left[\mathbf{y}_s - X_s^T \mathbf{\theta}^* \right]
=\mathbf{b}_t - \sigma^2 \left[ \Sigma_t - \lambda I \right] \mathbf{\theta}^*
\end{align}
Thus, we have $\Sigma_t \hat{\vecw}_t = \sigma^{-2} \mathbf{b}_t = \sigma^{-2} \bar{S}_t + \left[ \Sigma_t - \lambda I \right] \vecw^*$,
which implies 
\begin{align}
\hat{\vecw}_t  - \vecw^* = \Sigma_t^{-1} \left[\sigma^{-2} \bar{S}_t - \lambda   \vecw^* \right].
\end{align}
Consequently, for any $\tar \in \cV$, we have
\begin{align}
\left | x_\tar^T \left( \hat{\vecw}_t  - \vecw^* \right) \right | =& \,
\left | x_\tar^T  \Sigma_t^{-1} \left[\sigma^{-2} \bar{S}_t - \lambda   \vecw^* \right] \right | \leq 
\sqrt{x_v^T \Sigma_t^{-1} x_v} \|\sigma^{-2} \bar{S}_t - \lambda   \vecw^* \|_{\Sigma_t^{-1}} \nonumber \\
\leq & \, 
\sqrt{x_v^T \Sigma_t^{-1} x_v} \left[ \|\sigma^{-2} \bar{S}_t  \|_{\Sigma_t^{-1}} + \| \lambda   \vecw^* \|_{\Sigma_t^{-1}}  \right],
\end{align}
where the first inequality follows from Cauchy-Schwarz inequality and the second inequality follows from triangular inequality.
Note that $ \| \lambda   \vecw^* \|_{\Sigma_t^{-1}} =  \lambda  \|   \vecw^* \|_{\Sigma_t^{-1}} \leq \lambda  \|   \vecw^* \|_{\Sigma_0^{-1}} = \sqrt{\lambda}  \|   \vecw^* \|_2$. On the other hand, since $\Sigma_t^{-1} = n \sigma^2 \bar{V}_t^{-1}$, we have
$\|\sigma^{-2} \bar{S}_t  \|_{\Sigma_t^{-1}}  = \frac{\sqrt{n}}{\sigma} \| \bar{S}_t  \|_{\bar{V}_t^{-1}}$.
Thus, we have
\begin{equation}
\left | x_\tar^T \left( \hat{\vecw}_t  - \vecw^* \right) \right |  \leq 
\sqrt{x_v^T \Sigma_t^{-1} x_v} \left[  \frac{\sqrt{n}}{\sigma} \| \bar{S}_t  \|_{\bar{V}_t^{-1}} + \sqrt{\lambda}  \|   \vecw^* \|_2 \right].
\end{equation}
From \cref{thm:self-normalization}, we know with probability at least $1-\delta$, for all $t \leq T$, we have
\[
\| S_t\|_{\bar{V}_t^{-1}}^2 \leq 2 \log \left( \frac{\det \left( \bar{V}_{t}\right)^{1/2} \det \left( V\right)^{-1/2} }{\delta}\right)
 \leq 2 \log \left( \frac{\det \left( \bar{V}_{T}\right)^{1/2} \det \left( V\right)^{-1/2} }{\delta}\right),
\]
where $V= n \sigma^2 \lambda I$. Note that from the trace-determinant inequality, we have
\[
\det \left[ \bar{V}_T \right]^{\frac{1}{d}} \leq \frac{\Tr \left[  \bar{V}_T\right]}{d} \leq \frac{n \sigma^2 \lambda d +n^2 T}{d},
\]
where the last inequality follows from $\Tr \left[ X_t X_t^T \right] \leq n$ for all $t$. Note that $\det \left[ V\right] = \left[ n \sigma^2 \lambda \right]^d$, with a little bit algebra, we have
\[
\| S_t\|_{\bar{V}_t^{-1}} \leq \sqrt{d \log \left( 1+ \frac{nT}{\sigma^2 \lambda d} \right)+ 2 \log \left(\frac{1}{\delta} \right)} \quad \forall t \leq T
\]
with probability at least $1-\delta$. Thus, if
\[
c \geq  \frac{1}{\sigma} \sqrt{dn \log \left( 1+ \frac{nT}{\sigma^2 \lambda d} \right)+ 2 \log \left(\frac{1}{\delta} \right)}  + \sqrt{\lambda}  \|   \vecw^* \|_2,
\]
then $\cF_u$ holds with probability at least $1-\delta$. This concludes the proof of this lemma.
\end{proof}
Hence, from the union bound, we have the following lemma:
\begin{lemma}
For all $ \sigma, \lambda>0$, all $ \delta \in (0,1)$, and all
\begin{equation}
\label{append:c_lower}
c \geq  \frac{1}{\sigma} \sqrt{dn \log \left( 1+ \frac{nT}{\sigma^2 \lambda d} \right)+ 2 \log \left(\frac{n}{\delta} \right)}  + \sqrt{\lambda}  \max_{u \in \cV} \|   \vecw^*_\so \|_2
\end{equation}
we have $P \left( \bar{\cF} \right) \leq \delta$.
\end{lemma}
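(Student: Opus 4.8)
The plan is to obtain this lemma from the preceding per-source lemma by a straightforward union bound together with a rescaling of the confidence level. By construction the bad event decomposes as $\bar{\cF} = \bigcup_{\so \in \cV} \bar{\cF}_{\so}$, so the first step is simply $P(\bar{\cF}) \leq \sum_{\so \in \cV} P(\bar{\cF}_{\so})$ by subadditivity of probability.

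Second, I would apply the previous lemma once for each source node $\so \in \cV$, but with the failure probability $\delta/n$ in place of $\delta$. That lemma guarantees $P(\bar{\cF}_{\so}) \leq \delta/n$ whenever $c \geq \tfrac{1}{\sigma}\sqrt{dn\log(1 + \tfrac{nT}{\sigma^2\lambda d}) + 2\log(n/\delta)} + \sqrt{\lambda}\,\|\vecw^{*}_{\so}\|_2$; note that the substitution $\delta \mapsto \delta/n$ is exactly what turns the $2\log(1/\delta)$ term of the per-source threshold into the $2\log(n/\delta)$ term appearing in the statement here. The only point that needs a line of justification is that the single value of $c$ quoted in the lemma is simultaneously admissible for every $\so$: this holds because the first (square-root) term does not depend on $\so$, and because $\max_{\so' \in \cV}\|\vecw^{*}_{\so'}\|_2 \geq \|\vecw^{*}_{\so}\|_2$ for each $\so$, so the stated lower bound on $c$ dominates the per-source one for every source at once.

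Combining the two steps, for the stated choice of $c$ we get $P(\bar{\cF}_{\so}) \leq \delta/n$ for all $\so \in \cV$, and hence $P(\bar{\cF}) \leq \sum_{\so \in \cV} \delta/n = \delta$, which is the claim.

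I do not anticipate any real obstacle in this last step: all of the probabilistic content (the concentration of the self-normalized martingale $\bar{S}_t$ in $\bar{V}_t^{-1}$-norm, the trace--determinant bookkeeping, and the conversion into a bound on $|x_{\tar}^{\top}(\hat{\vecw}_t - \vecw^{*})|$) has already been carried out in the per-source lemma. What remains is only the union bound and the $\delta \mapsto \delta/n$, $\|\vecw^{*}_{\so}\|_2 \mapsto \max_{\so \in \cV}\|\vecw^{*}_{\so}\|_2$ bookkeeping described above; if there is a "hard part" at all, it is merely making sure these two substitutions are tracked consistently.
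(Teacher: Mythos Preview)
Your proposal is correct and follows exactly the same approach as the paper: the paper's proof is a one-paragraph union bound, applying the per-source lemma with $\delta/n$ in place of $\delta$ to obtain $P(\bar{\cF}_{\so}) \leq \delta/n$ for each $\so$, and then summing over $\so \in \cV$. You have in fact been slightly more explicit than the paper in justifying why the single threshold with $\max_{\so \in \cV}\|\vecw^{*}_{\so}\|_2$ dominates every per-source threshold, which the paper leaves implicit.
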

\begin{proof}
This lemma follows directly from the union bound. Note that for all $c$ satisfying Equation~\ref{append:c_lower}, we have
$P \left ( \bar{\cF}_\so \right) \leq \frac{\delta}{n}$ for all $\so \in \cV$, which implies
$P \left ( \bar{\cF} \right) = P \left ( \bigcup_{\so \in \cV}\bar{\cF}_\so \right) \leq \sum_{\so \in \cV} P \left ( \bar{\cF}_\so \right)  \leq \delta$.
\end{proof}

\subsection{Conclude the Proof}
Note that if we choose
\begin{equation}
c \geq  \frac{1}{\sigma} \sqrt{dn \log \left( 1+ \frac{nT}{\sigma^2 \lambda d} \right)+ 2 \log \left(n^2 T \right)}  + \sqrt{\lambda}  \max_{u \in \cV} \|   \vecw^*_\so \|_2, 
\end{equation}
we have $P \left( \bar{\cF} \right) \leq \frac{1}{nT}$. Hence for general $X$, we have
\begin{align}
R^{\rho \alpha} (T) \leq & \,
\frac{2c}{\rho \alpha} \E \left \{\sum_{t=1}^T \sum_{u \in \cS_t}  \sum_{v \in \cV}  \sqrt{x_{\tar}^T \Sigma_{\so, t-1}^{-1} x_{\tar}} \middle | \cF \right \} +
 \frac{1}{\rho} \nonumber \\
 \leq & \, \frac{2c}{\rho \alpha} n^{\frac{3}{2}} \sqrt{\frac{d K T \log \left ( 1 + \frac{n T}{d \lambda \sigma^2} \right )}{ \lambda \log \left(1+\frac{1}{\lambda \sigma^2} \right)}}+ \frac{1}{\rho} .
\end{align}
Note that with $c= \frac{1}{\sigma} \sqrt{dn \log \left( 1+ \frac{nT}{\sigma^2 \lambda d} \right)+ 2 \log \left(n^2 T \right)}  + \sqrt{\lambda}  \max_{u \in \cV} \|   \vecw^*_\so \|_2$, this regret bound is
$\tilde{O} \left(
\frac{n^2 d \sqrt{KT}}{\rho \alpha}
\right)$. 
Similarly, for the special case $X=I$, we have
\begin{align}
R^{\rho \alpha} (T) 
 \leq & \, \frac{2c}{\rho \alpha} n^{\frac{3}{2}} \sqrt{ \frac{KT \log \left ( 1 + \frac{T}{ \lambda \sigma^2} \right )}{ \lambda \log \left(1+\frac{1}{\lambda \sigma^2} \right)}} + \frac{1}{\rho} .
\end{align}
Note that with $c= \frac{n}{\sigma} \sqrt{ \log \left( 1+ \frac{T}{\sigma^2 \lambda } \right)+ 2 \log \left(n^2 T \right)}  + \sqrt{\lambda}  \max_{u \in \cV} \|   \vecw^*_\so \|_2 \leq \frac{n}{\sigma} \sqrt{ \log \left( 1+ \frac{T}{\sigma^2 \lambda } \right)+ 2 \log \left(n^2 T \right)}  + \sqrt{\lambda n}  $, this regret bound is
$\tilde{O} \left(
\frac{n^{\frac{5}{2}}  \sqrt{KT}}{\rho \alpha}
\right)$. 

\section{Self-Normalized Bound for Matrix-Valued Martingales}
\label{sec:matrix_conc}
In this section, we derive a ``self-normalized bound" for matrix-valued Martingales. 
This result is a natural generalization of Theorem 1 in \citet{abbasi2011improved}.
\begin{theorem}
\label{thm:self-normalization}
\textrm{(Self-Normalized Bound for Matrix-Valued Martingales)} Let $ \left \{ \cH_t \right \}_{t=0}^{\infty}$ be a filtration, and
$ \left \{ \eta_t \right \}_{t=1}^{\infty} $ be a $\Re^K$-valued Martingale difference sequence with respect to $ \left \{ \cH_t \right \}_{t=0}^{\infty}$.
Specifically, for all $t$, $\eta_t$ is $\cH_t$-measurable and satisfies (1) $\E \left[ \eta_t \middle | \cH_{t-1} \right]=0$
and (2) $\| \eta_t \|_{\infty} \leq 1$ with probability 1 conditioning on $\cH_{t-1}$.
Let $\left \{ X_t\right \}_{t=1}^{\infty}$ be a $\Re^{d \times K}$-valued stochastic process such that
$X_t$ is $\cH_{t-1}$ measurable. Assume that $V \in \Re^{d \times d}$ is a positive-definite matrix. For any $t \geq 0$, define
\begin{equation}
\bar{V}_t = V + K \sum_{s=1}^t X_s X_s^T \quad \quad S_t = \sum_{s=1}^t X_s \eta_s.
\end{equation}
Then, for any $\delta>0$, with probability at least $1-\delta$,  we have
\begin{equation}
\label{eqn:self-normalization}
\| S_t\|_{\bar{V}_t^{-1}}^2 \leq 2 \log \left( \frac{\det \left( \bar{V}_{t}\right)^{1/2} \det \left( V\right)^{-1/2} }{\delta}\right)  \quad \forall t \geq 0.
\end{equation}
\end{theorem}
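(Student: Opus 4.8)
The plan is to transcribe the method‑of‑mixtures (pseudo‑maximization) argument behind Theorem~1 of \citet{abbasi2011improved}, the one genuinely new ingredient being a conditional moment‑generating‑function bound for the matrix–vector increments $X_t\eta_t$ that reproduces the factor $K$ built into $\bar{V}_t$. First I would establish this bound. Fix $\lambda\in\Re^d$ and write $a_t=X_t^{\top}\lambda\in\Re^K$, so that $\lambda^{\top}X_t\eta_t=a_t^{\top}\eta_t$. Conditioned on $\cH_{t-1}$ the vector $a_t$ is deterministic (as $X_t$ is $\cH_{t-1}$‑measurable) while $\eta_t$ is mean zero with $\|\eta_t\|_\infty\le 1$; hence $a_t^{\top}\eta_t$ is a mean‑zero random variable supported in $[-\|a_t\|_1,\|a_t\|_1]\subseteq[-\sqrt{K}\,\|a_t\|_2,\ \sqrt{K}\,\|a_t\|_2]$ by Cauchy–Schwarz. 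Hoeffding's lemma for a bounded mean‑zero variable then gives
\[
\E\!\left[\exp\!\big(\lambda^{\top}X_t\eta_t\big)\ \big|\ \cH_{t-1}\right]\ \le\ \exp\!\Big(\tfrac12 K\|a_t\|_2^2\Big)\ =\ \exp\!\Big(\tfrac12\,\lambda^{\top}\big(K X_tX_t^{\top}\big)\lambda\Big).
\]
This is exactly the scalar sub‑Gaussian inequality driving \citet{abbasi2011improved}, but with variance‑proxy matrix $KX_tX_t^{\top}$ in place of $X_tX_t^{\top}$; the factor $K$ cannot be dropped because the coordinates of $\eta_t$ may be arbitrarily correlated (e.g.\ all equal to a single Rademacher sign), which is precisely why $\bar{V}_t$ is defined with that factor.

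Next I would run the standard supermartingale construction on this bound. For fixed $\lambda$ set $M_t^{\lambda}=\exp\!\big(\lambda^{\top}S_t-\tfrac12\lambda^{\top}(K\sum_{s=1}^{t}X_sX_s^{\top})\lambda\big)$, with $M_0^{\lambda}=1$; since $S_{t-1}$ and $X_t$ are $\cH_{t-1}$‑measurable, the estimate above yields $\E[M_t^{\lambda}\mid\cH_{t-1}]\le M_{t-1}^{\lambda}$, so $(M_t^{\lambda})_{t\ge0}$ is a nonnegative supermartingale started at $1$. Let $h$ be the density of $\cN(0,V^{-1})$ on $\Re^d$ and define the mixture $M_t=\int_{\Re^d}M_t^{\lambda}h(\lambda)\,\mathrm{d}\lambda$; by Tonelli's theorem $(M_t)$ is again a nonnegative supermartingale with $M_0=1$. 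Completing the square in the Gaussian integral and using $V+K\sum_{s\le t}X_sX_s^{\top}=\bar{V}_t$ gives the closed form
\[
M_t\ =\ \Big(\frac{\det V}{\det \bar{V}_t}\Big)^{1/2}\exp\!\Big(\tfrac12\,\| S_t\|_{\bar{V}_t^{-1}}^2\Big).
\]

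Finally I would apply a maximal inequality. By Ville's inequality for nonnegative supermartingales (equivalently, the stopped‑process / optional‑stopping argument used in \citet{abbasi2011improved} to handle the ``for all $t$'' quantifier), $\PP{\exists\, t\ge 0:\ M_t\ge 1/\delta}\le\delta$. On the complementary event $M_t<1/\delta$ for every $t\ge0$; substituting the closed form and taking logarithms yields $\| S_t\|_{\bar{V}_t^{-1}}^2\le 2\log\!\big(\det(\bar{V}_t)^{1/2}\det(V)^{-1/2}/\delta\big)$ simultaneously for all $t\ge0$, which is \eqref{eqn:self-normalization}.

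I expect the only real obstacle to be the first step, namely pinning down the correct conditional MGF bound: one must resist treating the coordinates of $\eta_t$ as independent (which would remove the $K$) and instead route through the crude inequality $\|a_t\|_1\le\sqrt{K}\,\|a_t\|_2$ together with Hoeffding's lemma for a single bounded mean‑zero variable. Everything downstream of that is a verbatim adaptation of the vector‑valued proof, because $S_t$ lives in $\Re^d$ exactly as there and the mixing distribution $\cN(0,V^{-1})$ is the same.
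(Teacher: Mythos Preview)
Your proposal is correct and follows essentially the same route as the paper: the paper also bounds $|\lambda^{\top}X_t\eta_t|\le\|X_t^{\top}\lambda\|_1\|\eta_t\|_\infty\le\sqrt{K}\,\|X_t^{\top}\lambda\|_2$ to obtain conditional $\sqrt{K}\,\|X_t^{\top}\lambda\|_2$-sub-Gaussianity, builds the same exponential supermartingale $M_t^{\lambda}$, mixes over $\Lambda\sim\cN(0,V^{-1})$ to get the identical closed form for $M_t$, and finishes via a stopping-time plus Markov argument that is exactly Ville's inequality. The only cosmetic difference is that the paper spells out the optional-stopping step (Doob/Fatou for $M_\tau^{\lambda}$, then Markov on $M_\tau$) rather than invoking Ville's inequality by name.
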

We first define some useful notations. Similarly as \citet{abbasi2011improved}, for any $\lambda \in \Re^d$ and any $t$, we define $D_t^{\lambda}$
as
\begin{equation}
D_t^{\lambda}=\exp \left( \lambda^T X_t \eta_t -\frac{K}{2} \| X_t^T \lambda \|^2_2 \right),
\end{equation}
and $M_t^{\lambda} = \prod_{s=1}^t D_s^{\lambda}$ with convention $M_0^{\lambda} = 1$. Note that both $D_t^{\lambda}$ and $M_t^{\lambda}$ are 
$\cH_t$-measurable, and $ \left \{ M_t^{\lambda} \right \}_{t=0}^{\infty}$ is a supermartingale with respect to the filtration
$\left \{ \cH_t \right \}_{t=0}^{\infty}$. To see it, notice that conditioning on $\cH_{t-1}$, we have
\[
\lambda^T X_t \eta_t = (X_t^T \lambda)^T \eta_t \leq \| X_t^T \lambda \|_1 \| \eta_t \|_{\infty} \leq \| X_t^T \lambda \|_1 \leq 
\sqrt{K} \| X_t^T \lambda \|_2
\]
with probability $1$. This implies that $\lambda^T X_t \eta_t$ is conditionally $\sqrt{K} \| X_t^T \lambda \|_2$-subGaussian.
Thus, we have
\[
\E \left[ D_t^{\lambda} \middle | \cH_{t-1} \right] = \E \left[ \exp \left( \lambda^T X_t \eta_t  \right ) \middle | \cH_{t-1} \right] \exp \left( -\frac{K}{2} \| X_t^T \lambda \|^2_2 \right ) \leq \exp \left( \frac{K}{2} \| X_t^T \lambda \|^2_2  -\frac{K}{2} \| X_t^T \lambda \|^2_2 \right ) =1.
\]
Thus,
\[
\E \left[M_t^{\lambda} \middle | \cH_{t-1} \right] = M^{\lambda}_{t-1} \E \left[D_t^{\lambda} \middle | \cH_{t-1} \right]  \leq M^{\lambda}_{t-1} .
\]
So $ \left \{ M_t^{\lambda} \right \}_{t=0}^{\infty}$ is a supermartingale with respect to the filtration
$\left \{ \cH_t \right \}_{t=0}^{\infty}$. Then, following Lemma 8 of \citet{abbasi2011improved}, we have the following lemma:
\begin{lemma}
\label{lemma:conc_1}
Let $\tau$ be a stopping time with respect to the filtration $\left \{  \cH_t \right \}_{t=0}^{\infty}$. Then for any $\lambda \in \Re^d$,
$M_{\tau}^{\lambda}$ is almost surely well-defined and $\E \left[ M_\tau^{\lambda}\right] \leq 1$.
\end{lemma}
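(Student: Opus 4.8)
The plan is to run the standard nonnegative-supermartingale argument, exactly as in Lemma 8 of \citet{abbasi2011improved}, adapted to our $D_t^{\lambda}, M_t^{\lambda}$. From the discussion preceding the lemma we already know that $\{M_t^{\lambda}\}_{t=0}^{\infty}$ is a supermartingale with respect to $\{\cH_t\}_{t=0}^{\infty}$, that $M_0^{\lambda}=1$ (so $\E[M_0^{\lambda}]=1$), and that $M_t^{\lambda}\geq 0$ for all $t$ since it is a product of exponentials. The first step is to make sense of $M_{\tau}^{\lambda}$ on the event $\{\tau=\infty\}$: by the supermartingale convergence theorem a nonnegative supermartingale converges almost surely to a finite limit, so $M_{\infty}^{\lambda}\stackrel{\Delta}{=}\lim_{t\to\infty}M_t^{\lambda}$ exists and is finite a.s., and we \emph{define} $M_{\tau}^{\lambda}=M_{\infty}^{\lambda}$ on $\{\tau=\infty\}$. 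This is precisely what gives the ``almost surely well-defined'' part of the statement.

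Next I would bound the expectation by combining optional stopping at a bounded time with Fatou's lemma. Since $\tau\wedge t$ is a bounded stopping time, the stopped process $\{M_{\tau\wedge s}^{\lambda}\}_{s\geq 0}$ is again a supermartingale, and therefore $\E[M_{\tau\wedge t}^{\lambda}]\leq \E[M_0^{\lambda}]=1$ for every $t\geq 0$. As $t\to\infty$ we have $M_{\tau\wedge t}^{\lambda}\to M_{\tau}^{\lambda}$ almost surely (on $\{\tau<\infty\}$ the sequence is eventually constant and equal to $M_{\tau}^{\lambda}$; on $\{\tau=\infty\}$ this is the definition of $M_{\infty}^{\lambda}$). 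Because each $M_{\tau\wedge t}^{\lambda}$ is nonnegative, Fatou's lemma applies and yields
\[
\E\!\left[M_{\tau}^{\lambda}\right]=\E\!\left[\liminf_{t\to\infty}M_{\tau\wedge t}^{\lambda}\right]\leq \liminf_{t\to\infty}\E\!\left[M_{\tau\wedge t}^{\lambda}\right]\leq 1,
\]
which is the desired inequality.

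The only genuinely delicate point is the handling of $\{\tau=\infty\}$: without the a.s.\ convergence of the nonnegative supermartingale the quantity $M_{\tau}^{\lambda}$ would be undefined there and the claim $\E[M_{\tau}^{\lambda}]\leq 1$ would be vacuous or ambiguous. Everything else — the supermartingale property of the stopped process and the one-sided Fatou bound — is routine. In particular, we deliberately avoid needing uniform integrability or an $L^1$ convergence statement, since only the $\liminf$ inequality is required, and this is what lets the argument go through for an arbitrary, possibly infinite, stopping time $\tau$.
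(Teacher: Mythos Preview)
Your argument is correct and matches the paper's proof essentially line for line: both invoke Doob's convergence theorem to define $M_{\infty}^{\lambda}$ and hence $M_{\tau}^{\lambda}$ on $\{\tau=\infty\}$, then apply Fatou's lemma to the stopped process $M_{\tau\wedge t}^{\lambda}$ (the paper calls it $Q_t^{\lambda}$) together with $\E[M_{\tau\wedge t}^{\lambda}]\leq 1$. There is nothing to add.
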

\begin{proof}
First, we argue that $M_{\tau}^{\lambda}$ is almost surely well-defined.
By Doob's convergence theorem for nonnegative supermartingales, 
$M_{\infty}^{\lambda}=\lim_{t \rightarrow \infty} M_t^{\lambda}$ is almost surely well-defined.
Hence $M_{\tau}^{\lambda}$ is indeed well-defined independent of $\tau<\infty$ or not.
Next, we show that $\E \left[ M_\tau^{\lambda}\right] \leq 1$. Let $Q_t^{\lambda}=M_{\min \{ \tau, t\}}^{\lambda}$ be
a stopped version of $\left \{ M_t^{\lambda} \right \}_{t=1}^{\infty}$. By Fatou's Lemma, we have
$\E \left[ M_\tau^{\lambda}\right] = \E \left[ \liminf_{t \rightarrow \infty} Q_t^{\lambda} \right] \leq \liminf_{t \rightarrow \infty} 
\E \left[ Q_t^{\lambda} \right] \leq 1$.
\end{proof}

The following results follow from Lemma 9 of \citet{abbasi2011improved}, which uses the ``method of mixtures" technique.
Let $\Lambda$ be a Gaussian random vector in
$\Re^d$ with mean $0$ and covariance matrix $V^{-1}$, and independent of all the other random variables.
Let $\cH_{\infty}$ be the tail $\sigma$-algebra of the filtration,
i.e. the $\sigma$-algebra generated by the union of all events in the filtration.
We further define $M_t = \E \left [ M_t^{\Lambda} \middle | \cH_{\infty} \right]$ for all $t=0,1,\ldots$ and $t=\infty$.
Note that $M_{\infty}$ is almost surely well-defined since $M_{\infty}^{\lambda}$ is almost surely well-defined.

Let $\tau$ be a stopping time with respect to the filtration $\left \{ \cH_t \right \}_{t=0}^{\infty}$. Note that $M_{\tau}$ is almost surely well-defined since
$M_{\infty}$ is almost surely well-defined. Since $\E \left[ M_{\tau}^{\lambda} \right] \leq 1$ from  Lemma~\ref{lemma:conc_1}, we have
\[
\E \left[ M_{\tau} \right] = \E \left [ M_{\tau}^{\Lambda} \right] = \E \left [ \E \left[ M_{\tau}^{\Lambda} \middle | \Lambda \right] \right] \leq 1.
\]
The following lemma follows directly from the proof for Lemma 9 of \citet{abbasi2011improved}, which can be derived by algebra.
The proof is omitted here.
\begin{lemma}
\label{lemma:conc_2}
For all finite $t=0,1,\ldots$, we have
\begin{equation}
M_t = \left( \frac{\det(V)}{\det (\bar{V}_t)}\right)^{1/2} \exp \left( \frac{1}{2} \| S_t \|_{\bar{V}_t^{-1}}\right).
\end{equation}
\end{lemma}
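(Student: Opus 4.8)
The plan is to carry out the standard ``method of mixtures'' computation: rewrite $M_t^{\lambda}$ as a single exponential-quadratic in $\lambda$, then integrate $\lambda$ out against the Gaussian prior and complete the square. First I would get a closed form for $M_t^{\lambda}$. Since $M_t^{\lambda}=\prod_{s=1}^t D_s^{\lambda}$ with $D_s^{\lambda}=\exp\!\big(\lambda^T X_s \eta_s - \tfrac{K}{2}\|X_s^T\lambda\|_2^2\big)$, collecting the exponents and using $\bar{V}_t = V + K\sum_{s=1}^t X_s X_s^T$ gives
\[
M_t^{\lambda}=\exp\!\Big(\lambda^T S_t - \frac{K}{2}\sum_{s=1}^t \lambda^T X_s X_s^T \lambda\Big)
=\exp\!\Big(\lambda^T S_t - \frac12 \lambda^T\big(\bar{V}_t - V\big)\lambda\Big).
\]

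Next I would integrate against the law of $\Lambda \sim \mathcal{N}(0, V^{-1})$. Because $\Lambda$ is independent of all the other randomness, and in particular of $\cH_\infty$ (under which $S_t$ and $\bar{V}_t$ are measurable and hence act as constants), the conditional expectation becomes a deterministic Gaussian integral in which the factor $e^{-\frac12\lambda^T V\lambda}$ coming from the prior density cancels the $e^{+\frac12\lambda^T V\lambda}$ hidden inside $M_t^{\lambda}$:
\[
M_t = \E\!\left[M_t^{\Lambda}\,\middle|\,\cH_\infty\right]
= \frac{\det(V)^{1/2}}{(2\pi)^{d/2}} \int_{\Re^d} \exp\!\Big(\lambda^T S_t - \frac12\lambda^T \bar{V}_t \lambda\Big)\, d\lambda .
\]
Completing the square about $\bar{V}_t^{-1} S_t$ extracts the constant $\exp\!\big(\tfrac12 S_t^T \bar{V}_t^{-1} S_t\big)=\exp\!\big(\tfrac12\|S_t\|_{\bar{V}_t^{-1}}^2\big)$, and the remaining centered Gaussian integral equals $(2\pi)^{d/2}\det(\bar{V}_t)^{-1/2}$; multiplying these factors yields $M_t = \big(\det V / \det \bar{V}_t\big)^{1/2}\exp\!\big(\tfrac12\|S_t\|_{\bar{V}_t^{-1}}^2\big)$, which is the claimed identity (for $t=0$ it reduces to the trivial $M_0=1$, since $\bar{V}_0=V$ and $S_0=0$).

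Since this is essentially bookkeeping, the only step that deserves real care is the first equality in the second display --- replacing the conditional expectation $\E[M_t^{\Lambda}\mid\cH_\infty]$ by an ordinary integral against the Gaussian density --- which I would justify using the independence of $\Lambda$ from the filtration together with Tonelli's theorem (all integrands are nonnegative), noting also that $V\succ0$ forces $\bar{V}_t\succ0$, so $\bar{V}_t^{-1}$ exists and the Gaussian integral converges. To then close \cref{thm:self-normalization} from this identity, I would combine it with the bound $\E[M_\tau]\le 1$ already established for an arbitrary stopping time $\tau$: Markov's inequality gives $P\big(M_\tau\ge 1/\delta\big)\le\delta$, and on the complementary event $\tfrac12\|S_\tau\|_{\bar{V}_\tau^{-1}}^2 \le \log\!\big(\det(\bar{V}_\tau)^{1/2}\det(V)^{-1/2}/\delta\big)$; taking $\tau$ to be the first time this inequality fails (the usual stopping-time trick) upgrades the conclusion to hold for all $t\ge0$ simultaneously with probability at least $1-\delta$.
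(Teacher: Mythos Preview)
Your proposal is correct and is precisely the standard method-of-mixtures computation that the paper appeals to: the paper omits the proof entirely, saying only that it ``follows directly from the proof for Lemma 9 of \citet{abbasi2011improved}, which can be derived by algebra,'' and your derivation is exactly that algebra. Your additional remarks on closing \cref{thm:self-normalization} via the stopping-time argument also match the paper's subsequent proof.
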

Note that Lemma~\ref{lemma:conc_2} implies that for finite $t$,
$\| S_t \|^2_{\bar{V}_{t}^{-1}} > 2 \log \left( \frac{\det \left( \bar{V}_{t}\right)^{1/2} \det \left( V\right)^{-1/2} }{\delta}\right) $
and $M_{t} > \frac{1}{\delta}$ are equivalent. Consequently, for any stopping time $\tau$, the event
\[
\left \{  \tau<\infty, \, \| S_\tau \|^2_{\bar{V}_{\tau}^{-1}} > 2 \log \left( \frac{\det \left( \bar{V}_{\tau}  \right)^{1/2} \det \left( V\right)^{-1/2} }{\delta}\right) 
\right \} 
\]
is equivalent to $\left \{  \tau<\infty, \, M_{\tau} > \frac{1}{\delta} \right \}$.
Finally, we prove Theorem~\ref{thm:self-normalization}:

\begin{proof}
We define the ``bad event" at time $t=0,1,\ldots$ as:
\[
B_t (\delta)=\left \{
\| S_t\|^2_{\bar{V}_{t}^{-1}} > 2 \log \left( \frac{\det \left( \bar{V}_{t}\right)^{1/2} \det \left( V\right)^{-1/2} }{\delta}\right) 
 \right \}.
\]
We are interested in bounding the probability of the ``bad event" $\bigcup_{t=1}^{\infty} B_t (\delta)$. Let $\Omega$ 
denote the sample space, for any outcome $\omega \in \Omega$, we define
$\tau (\omega) = \min \{ t \geq 0: \, \omega \in B_t (\delta) \}$, with the convention that $\min \emptyset = +\infty$.
Thus, $\tau$ is a stopping time. Notice that
$\bigcup_{t=1}^{\infty} B_t (\delta) = \{ \tau < \infty\}$. Moreover, if $\tau<\infty$, then by definition of $\tau$, we have
$\| S_\tau\|^2_{\bar{V}_{\tau}^{-1}} > 2 \log \left( \frac{\det \left( \bar{V}_{\tau}\right)^{1/2} \det \left( V\right)^{-1/2} }{\delta}\right) $,
which is equivalent to $M_{\tau} > \frac{1}{\delta}$ as discussed above.
Thus we have
\begin{align*}
P \left( \bigcup_{t=1}^{\infty} B_t (\delta) \right) \stackrel{(a)}{=}& \, P \left(  \tau < \infty \right) \nonumber \\
\stackrel{(b)}{=} & \, P \left (  \| S_\tau  \|^2_{\bar{V}_{\tau}^{-1}} > 2 \log \left( \frac{\det \left( \bar{V}_{\tau} \right)^{1/2} \det \left( V\right)^{-1/2} }{\delta}\right), \, \tau < \infty \right) \nonumber \\
 \stackrel{(c)}{=} & \, P \left(  M_{\tau}> 1/\delta, \,\tau < \infty \right) \nonumber \\
\leq & \, P \left(  M_{\tau}> 1/\delta \right) \nonumber \\
\stackrel{(d)}{\leq}  & \, \delta,
\end{align*}
where equalities (a) and (b) follow from the definition of $\tau$, equality (c) follows from Lemma~\ref{lemma:conc_2}, and inequality (d) follows from Markov's inequality.
This concludes the proof for Theorem~\ref{thm:self-normalization}.
\end{proof}

We conclude this section by briefly discussing a special case. If for any $t$, the elements of $\eta_t$ are statistically independent conditioning on $\cH_{t-1}$, then we can prove a variant of Theorem~\ref{thm:self-normalization}:
with $\bar{V}_t = V +  \sum_{s=1}^t X_s X_s^T$ and $S_t = \sum_{s=1}^t X_s \eta_s$,
Equation~\ref{eqn:self-normalization} holds with probability at least $1-\delta$. To see it, notice that in this case
 \begin{align}
 \E \left[ \exp \left( \lambda^T X_t \eta_t \right) \middle | \cH_{t-1} \right] =& \,  \E \left[ \prod_{k=1}^K \exp \left(  (X_t^T \lambda)(k) \eta_t(k) \right) \middle | \cH_{t-1} \right] \nonumber \\
 \stackrel{(a)}{=} & \,  \prod_{k=1}^K \E \left[ \exp \left(  (X_t^T \lambda)(k) \eta_t(k) \right) \middle | \cH_{t-1} \right] \nonumber \\
 \stackrel{(b)}{\leq} & \, \prod_{k=1}^K \exp \left( \frac{(X_t^T \lambda)(k)^2}{2} \right) = \exp \left( \frac{ \left \| X_t^T \lambda \right \|^2}{2} \right),
 \end{align}
 where $(k)$ denote the $k$-th element of the vector. Note that the equality (a) follows from the conditional independence of the elements
 in $\eta_t$, and inequality (b) follows from $|\eta_t(k) | \leq 1$ for all $t$ and $k$. Thus, if we redefine
$
D_t^{\lambda}=\exp \left( \lambda^T X_t \eta_t -\frac{1}{2} \| X_t^T \lambda \|^2_2 \right)$,
and $M_t^{\lambda} = \prod_{s=1}^t D_s^{\lambda}$, we can prove that $\{ M_t^{\lambda}\}_{t}$ is a supermartingale.
Consequently, using similar analysis techniques, we can prove the variant of Theorem~\ref{thm:self-normalization}
discussed in this paragraph.

\section{Laplacian Regularization}
\label{sec:lap-reg}
As explained in section~\ref{sec:implementation}, enforcing Laplacian regularization leads to the following optimization problem:
\begin{flalign*}
\hat{\vecw}_{t} = \argmin_{\vecw}[ \sum_{j = 1}^{t} \sum_{\so \in \cS_{t}}(y_{\so,j} - {\vecw}_{\so} X)^{2} + \lambda_{2} \sum_{(\so_{1},\so_{2}) \in \cE} \vert \vert {\vecw}_{\so_{1}} - {\vecw}_{\so_{2}} \vert \vert_{2}^{2} ]
\end{flalign*}
Here, the first term is the data fitting term, whereas the second term is the Laplacian regularization terms which enforces smoothness in the source node estimates. This can optimization problem can be re-written as follows:
\begin{align}
\hat{\vecw}_{t} = \argmin_{\vecw} \bigg[\sum_{j = 1}^{t} \sum_{\so \in \cS_{t}}(y_{\so,j} - {\vecw}_{\so} X)^{2} \nonumber + \lambda_{2} \vecw^{T} (L \otimes I_{d}) \vecw  \bigg]
\label{eq:lr-1}
\end{align}
Here, $\vecw \in \Re^{dn}$ is the concatenation of the $n$ $d$-dimensional $\vecw_{\so}$ vectors and $A \otimes B$ refers to the Kronecker product of matrices $A$ and $B$. Setting the gradient of equation~\ref{eq:lr-1} to zero results in solving the following linear system:
\begin{align}
[X X^{T} \otimes I_{n} + \lambda_{2} L \otimes I_{d}] \hat{\vecw}_{t}  = b_{t}
\end{align}
Here $b_{t}$ corresponds to the concatenation of the $n$ $d$-dimensional vectors $b_{u,t}$. This is the Sylvester equation and there exist sophisticated methods of solving it. For simplicity, we focus on the special case when the features are derived from the Laplacian eigenvectors (Section~\ref{sec:implementation}). 

Let $\beta_{t}$ be a diagonal matrix such that $\beta_{t}{\so,\so}$ refers to the number of times node $\so$ has been selected as the source. Since the Laplacian eigenvectors are orthogonal, when using Laplacian features, $X X^{T} \otimes I_{n} = \beta \otimes I_{d}$. We thus obtain the following system:
\begin{align}
[ \left( \beta + \lambda_{2} L \right) \otimes I_{d}] \hat{\vecw}_{t}  = b_{t} 
\end{align}
Note that the matrix $ \left( \beta + \lambda_{2} L \right)$ and thus $ \left( \beta + \lambda_{2} L \right) \otimes I_{d}$ is positive semi-definite and can be solved using conjugate gradient~\cite{hestenes1952methods}. 

For conjugate gradient, the most expensive operation is the matrix-vector multiplication $\left( \beta + \lambda_{2} L \right) \otimes I_{d}] \mathbf{v}$ for an arbitrary vector $\mathbf{v}$. Let $\vect$ be an operation that takes a $d \times n$ matrix and stacks it column-wise converting it into a $dn$-dimensional vector. Let $V$ refer to the $d \times n$ matrix obtained by partitioning the vector $\mathbf{v}$ into columns of $V$. Given this notation, we use the property that $(B^{T} \otimes A) \mathbf{v} = vec(AVB)$. This implies that the matrix-vector multiplication can then be rewritten as follows: 
\begin{align}
\left( \beta + \lambda_{2} L \right) \otimes I_{d}  \mathbf{v} = \vect(V \left( \beta + \lambda_{2} L^{T} \right))
\end{align}

Since $\beta$ is a diagonal matrix, $V \beta$ is an $O(dn)$ operation, whereas $V L^{T}$ is an $O(dm)$ operation since there are only $m$ non-zeros (corresponding to edges) in the Laplacian matrix. Hence the complexity of computing the mean $\hat{\vecw}_{t}$ is an order $O((d(m + n)) \kappa)$ where $\kappa$ is the number of conjugate gradient iterations. In our experiments, similar to~\cite{vaswani2017horde},  we warm-start with the solution at the previous round and find that $\kappa = 5$ is enough for convergence.

Unlike independent estimation where we update the UCB estimates for only the selected nodes, when using Laplacian regularization, the upper confidence values for each reachability probability need to be recomputed in each round. Once we have an estimate of $\vecw$, calculating the mean estimates for the reachabilities for all $u,v$ requires $O(d n^{2})$ computation. This is the most expensive step when using Laplacian regularization. 

We now describe how to compute the confidence intervals. For this, let $\vecD$ denote the diagonal of $(\beta + \lambda_{2} L)^{-1}$. The UCB value $\CB_{u,v,t}$ can then be computed as:
\begin{align}
\CB_{u,v,t} = \sqrt{\vecD_{u}} \vert \vert x_{v} \vert \vert_{2}
\end{align}
The $\ell_{2}$ norms for all the target nodes $\tar$ can be pre-computed. If we maintain the $\vecD$ vector, the confidence intervals for all pairs can be computed in $O(n^2)$ time. 

Note that $\vecD_{t}$ requires $O(n)$ storage and can be updated across rounds in $O(K)$ time using the Sherman Morrison formula. Specifically, if $\vecD_{\so,t}$ refers to the $\so^{th}$ element in the vector $\vecD_{t}$, then
\[
\vecD_{\so,t+1} = 
\begin{dcases}
\frac{\vecD_{\so,t}}{(1 + \vecD_{\so,t})}, & \text{if} \so \in \seeds_{t} \\
\vecD_{\so, t}, & \text{otherwise}
\end{dcases}
\]
Hence, the total complexity of implementing Laplacian regularization is $O(d n^2)$. We need to store the $\vecw$ vector, the Laplacian and the diagonal vectors $\beta$ and $\vecD$. Hence, the total memory requirement is $O(dn + m)$. 

\end{document}